\definecolor{mygreen}{rgb}{0.0, 0.5, 0.0}
\definecolor{winered}{rgb}{0.8,0,0}
\definecolor{myblue}{rgb}{0,0,0.8}
\newtheorem{theorem}{Theorem}
\newtheorem{lemma}{Lemma}
\newtheorem{proposition}{Proposition}
\newtheorem{assumption}{Assumption}
\newcommand{\mc}{\mathcal}
\newcommand{\mb}{\mathbf}
\DeclareMathOperator*{\argmin}{\arg\!\min}
\newcommand{\E}{\mathbb{E}}
\newcommand{\cA}{\mathcal{A}}
\newcommand{\cF}{\mathcal{F}}
\newcommand{\cG}{\mathcal{G}}
\newcommand{\cM}{\mathcal{M}}
\newcommand{\cO}{\mathcal{O}}
\newcommand{\cP}{\mathcal{P}}
\newcommand{\cS}{\mathcal{S}}
\newcommand{\cV}{\mathcal{V}}
\newcommand{\cW}{\mathcal{W}}
\newcommand{\iprod}[2]{\left\langle #1,\, #2\right\rangle}
\newcommand{\bracket}[1]{\left[ #1\right]}
\newcommand{\norm}[1]{\left\| #1\right\|}
\newcommand{\bigo}[1]{\cO\left( #1\right)}
\title{Towards Fast Rates for Federated and Multi-Task Reinforcement Learning}
\author{Feng Zhu, Robert W. Heath Jr., and Aritra Mitra
\thanks{F. Zhu and A. Mitra are with the Dept. of Electrical and Computer Engineering, North Carolina State University. Email: {\tt \{fzhu5, amitra2\}@ncsu.edu}. Robert W. Heath Jr. is with the Dept. of Electrical and Computer Engineering at the University of California, San Diego, USA. Email: {\tt rwheathjr@ucsd.edu}. This material is based upon work supported in part by the National Science Foundation under Grant No. NSF-CCF-2225555.}}
\date{}
\begin{document}
\maketitle
\thispagestyle{empty}
\pagestyle{empty}
\begin{abstract}
    We consider a setting involving $N$ agents, where each agent interacts with an environment modeled as a Markov Decision Process (MDP). The agents' MDPs differ in their reward functions, capturing heterogeneous objectives/tasks. The collective goal of the agents is to communicate intermittently via a central server to find a policy that maximizes the average of long-term cumulative rewards across environments. The limited existing work on this topic either only provide asymptotic rates, or generate biased policies, or fail to establish any benefits of collaboration. In response, we propose \texttt{Fast-FedPG} - a novel federated policy gradient algorithm with a carefully designed bias-correction mechanism. Under a gradient-domination condition, we prove that our algorithm guarantees (i) fast linear convergence with exact gradients, and (ii) sub-linear rates that enjoy a linear speedup w.r.t. the number of agents with noisy, truncated policy gradients. Notably, in each case, the convergence is to a globally optimal policy with no heterogeneity-induced bias. In the absence of gradient-domination, we establish convergence to a first-order stationary point at a rate that continues to benefit from collaboration. 
\end{abstract}
\section{Introduction}
Despite the many successes of reinforcement learning (RL) in various applications (e.g., games, robotics, autonomous navigation, etc.), a large part of existing RL theory only provides asymptotic rates. Recently however, there has been a surge of interest in characterizing the finite-time behavior of model-free RL algorithms. For contemporary RL applications with massive state and action spaces, such finite-time analysis has revealed the need for lots of data samples to achieve desirable performance. Given this premise, it is natural to wonder if data collected from diverse environments can alleviate the sample-complexity bottleneck. This has prompted the emergence of a new paradigm called federated reinforcement learning (FRL),  where agents interacting with potentially distinct environments collaborate with the hope of learning ``good" policies with fewer samples than if they acted alone~\cite{FRL}. Unfortunately, existing work in FRL either only provide empirical results~\cite{FRL}, or make the unrealistic assumption of identical agent environments~\cite{FRL_identical_linear, dal2023federated, liu2023distributed, lan2023improved, woo2023blessing, tian2024one}, or provide rates that exhibit a non-vanishing environmental-heterogeneity-induced bias term~\cite{jin2022federated, fedtd, zhang2024finite}. In particular, such an additive bias term negates potential statistical gains from collaboration. In this paper, \emph{we show for the first time that achieving collaborative speedups in FRL is possible even when data is collected from non-identical environments.}

\textbf{Our model.} We consider a sequential decision-making setting involving $N$ agents, where each agent's environment is modeled as a Markov Decision Process (MDP). The agents' MDPs share the same state and action spaces, have identical probability transition maps, but differ in their reward functions; the non-identical reward functions help capture different goals/tasks across environments. The agents collaborate via a central server to learn a policy that can perform well in all environments by maximizing an average of the agents' long-term cumulative rewards. In this sense, our work is also related to multi-task RL, where data from different tasks is used to improve the performance on any given task~\cite{sodhani}. As in the standard FL setting~\cite{mcmahan}, to achieve communication-efficiency, the agents are allowed to communicate only once in every $H$ iterations. Furthermore, to respect privacy, agents are not allowed to reveal their raw data (i.e., states, actions, and rewards). With this setup, we formulate a heterogeneous federated policy optimization problem. Recently, the authors in \cite{xie2023fedkl, jin2022federated, zeng2021decentralized} have explored the effects heterogeneity in the context of federated/decentralized policy gradient (PG) methods. While~\cite{xie2023fedkl} only provides asymptotic rates, \cite{jin2022federated} and~\cite{zeng2021decentralized} fail to establish any provable benefits of collaboration. In this context, our \textbf{contributions} are as follows.

 $\bullet$ \textbf{New algorithm}. We propose a novel federated PG algorithm called \texttt{Fast-FedPG} that, unlike standard ``model-averaging" algorithms~\cite{FRL_identical_linear, fedtd, jin2022federated, zeng2021decentralized, zhang2024finite} typically used in FRL, relies on a carefully constructed de-biasing/drift-mitigation mechanism using memory. Such a mechanism has not been explored earlier in FRL. 

$\bullet$  \textbf{Key structural result}. To establish fast rates, we prove a simple, yet key structural result (Proposition~\ref{prop:key_result}) that relates the gradient of our objective function to the policy gradient of an ``average MDP" constructed from the agents' MDPs. 

$\bullet$  \textbf{Fast rates and linear speedup}. Under a gradient-domination condition used to prove fast rates for centralized PG methods~\cite{meiglobal, yuan2022}, we prove that \texttt{Fast-FedPG} guarantees linear convergence to a globally optimal policy with exact gradients. With noisy, truncated policy gradients, we prove a rate of $\tilde{O}(1/(NHT))$ after $T$ communication rounds, with $H$ local PG steps within each round; see Theorem~\ref{thm:fastrate}. Notably, our rates feature no heterogeneity-induced bias, and exhibit a clear $N$-fold speedup w.r.t. the number of agents, \emph{thereby providing the first collaborative speedup result in FRL despite heterogeneity.} Finally, in Theorem~\ref{thm:statpt}, we show that in the absence of gradient-domination, \texttt{Fast-FedPG} guarantees convergence to a first-order stationary point at a rate of $\tilde{O}(1/\sqrt{NHT})$, i.e., with a $\sqrt{N}$-fold speedup.

{
\textbf{Discussion of Concurrent Work.} We would like to point to a couple of pieces of closely related concurrent work~\cite{bai2024finite, wang2024momentum} that appeared on arXiv after the submission of our paper to the Decision and Control Conference (CDC), 2024. Each of these papers studies the same problem as us. The rates derived in~\cite{bai2024finite} are slower than in our paper; moreover, no linear speedup result is established in~\cite{bai2024finite}. The authors in~\cite{wang2024momentum} propose an algorithm where each agent employs momentum locally. Like us, they establish linear speedup results with no heterogeneity-induced bias term. Our algorithm and results differ from those in~\cite{wang2024momentum} in the following ways. First, one key message conveyed by our work is that \emph{momentum is neither needed to achieve a linear speedup nor to eliminate the effect of a heterogeneity-induced bias term.} Second, the results in~\cite{wang2024momentum} concern convergence to first-order stationary points. In contrast, by leveraging the key structural result in Proposition~\ref{prop:key_result}, we show how our approach can also guarantee fast convergence to an optimal policy parameter. Third, in terms of technical assumptions, the results in~\cite{wang2024momentum} seem to require uniform boundedness of the first and second derivatives of the log-density of the policy function (Assumption 6.1 in~\cite{wang2024momentum}). We do not need such an assumption in our analysis. Finally, we note that by using momentum, the authors in~\cite{wang2024momentum} are able to prove faster rates than us when it comes to convergence to a first-order stationary point. In particular, the rate of convergence to a first-order stationary point is $\tilde{O}(1/(NHT)^{2/3})$ in~\cite{wang2024momentum}; in contrast, our corresponding rate in Theorem~\ref{thm:statpt} is slower: it is $\tilde{O}(1/\sqrt{NHT})$.} 

\section{Problem Formulation}
\label{sec:prob_form}
We start by describing our RL setting, and then introduce the policy gradient method to formulate our problem of interest. 

\noindent \textbf{RL setting.} Our setting involves $N$ agents, where each agent $i$ interacts with an environment characterized by an MDP $\mc{M}_i = (\cS, \cA, R_i, \mc{P}, \gamma).$ Here, $\cS$ is a finite state space, $\mc{A}$ is a finite action space, $R_i:\cS\times\cA\rightarrow[0, 1]$ is a bounded reward function \emph{specific to agent} $i$ where $R_i(s,a)$ represents the immediate expected reward for taking action $a$ in state $s$, $\mc{P}$ is a Markovian transition model where $\mc{P}(s'| s,a)$ represents the probability of transitioning from state $s$ to $s'$ under action $a$, and $\gamma \in [0,1)$ is a discount factor. Therefore, agents share the same state and action spaces, are governed by the same probability transition maps, but have potentially different goals/objectives as captured by their unique reward functions.\footnote{{The assumption that the agents share the same state transition kernels is only needed to prove fast linear convergence to an optimal policy parameter (Theorem~\ref{thm:fastrate}). If one only cares about sub-linear convergence to a stationary point as in Theorem~\ref{thm:statpt}, then this assumption is no longer needed for our analysis to go through.}} The distinction in the reward functions captures \emph{heterogeneity} across the agents' environments.  

The behavior of an agent is captured by a stochastic policy $\pi: \mc{S} \mapsto \Delta(\mc{A})$, where $\Delta(\mc{A})$ is the space of probability distributions over $\mc{A}.$ The dynamics of an agent-MDP interaction process unveils as follows. Starting from some initial state $s_{i}^{(0)}$, suppose an agent $i$ interacts with its MDP $\mc{M}_i$ by playing a particular policy $\pi$. In particular, at each time-step $t=0, 1, 2, \ldots$, the agent plays $a_{i}^{(t)} \sim \pi(\cdot|s_{i}^{(t)})$, observes an immediate reward $r_{i}^{(t)}=R_i(s_{i}^{(t)}, a_{i}^{(t)})$, and transitions to a new state $s_{i}^{(t+1)} \sim \mc{P}(\cdot| s_{i}^{(t)}, a_{i}^{(t)})$. This repeated interaction process generates a trajectory $\tau_i = \{(s_{i}^{(0)}, a_{i}^{(0)}, r_{i}^{(0)}), (s_{i}^{(1)}, a_{i}^{(1)}, r_{i}^{(1)}), \cdots\}.$ In the single-agent RL setting, the typical goal of the agent $i$ would be to find a policy $\pi$ that maximizes a $\gamma$-discounted infinite-horizon expected cumulative reward, given by
\begin{equation}
    J_i(\pi) \triangleq \E\bracket{\sum_{t=0}^\infty \gamma^t r_{i}^{(t)}\Big|s_i^{(0)}\sim\rho, \pi},
\label{eqn:value_func}
\end{equation}
where $\rho$ is an initial state distribution, and the expectation is taken w.r.t. the randomness in the initial state, the randomness induced by the stochastic policy $\pi$, and the randomness due to the state transitions prescribed by $\mc{P}.$ For simplicity, we will assume throughout that all agents start from the same initial state distribution $\rho$. When the dynamics of the MDP are known, an optimal policy can be found using dynamic programming~\cite{puterman}. The learning aspect in our problem, however, stems from the fact that the {reward functions $\{R_i\}_{i\in [N]}$ and} state transition maps $\mc{P}$ are \emph{unknown} to the agent. Given the fact that PG methods are easy to implement, we now describe the policy optimization approach for finding optimal policies that belong to a parameterized class. 

\noindent \textbf{Policy Gradient (PG) methods.} Consider a class of parametric policies $\{\pi_{\theta}: \theta \in \mathbb{R}^d\}$, where $\pi_{\theta}$ is assumed to be differentiable w.r.t. $\theta$. A common example of such a class is the \emph{softmax policy}: 
\begin{align}
    \pi_\theta(a | s) = \frac{\exp(\theta_{s,a})}{\sum_{a'\in\cA}\exp(\theta_{s,a'})},
\end{align}
where the parameter space is $\mathbb{R}^{|\mc{S}| |\mc{A}|}.$ For other common parametric classes (e.g., log-linear, neural softmax, etc.,), we refer the reader to~\cite{agarwalPG}. Given a parameterized policy $\pi_{\theta}$, let $J_i(\theta) \triangleq J_i(\pi_{\theta})$ be agent $i$'s local value-function associated with the parameter $\theta$; here, $J_i(\cdot)$ is as defined in Eq.~\eqref{eqn:value_func}. Policy gradient  methods operate by incrementally updating the parameter $\theta$ by performing gradient ascent on the value function. 

\noindent \textbf{Goal.} Informally, we seek to find a policy $\pi_{\theta}$ that performs ``well" on the set of environments $\{\mc{M}_i\}_{i\in[N]}$. This formulation is inspired by the federated supervised learning setting where agents with access to data from different distributions collaborate to find models with superior statistical performance relative to models trained with just individual agent-data. To formally set up our problem using the language of optimization, for each $i\in [N]$ and $(s,a) \in \mc{S} \times \mc{A}$, we reset $R_i(s,a) \leftarrow 1-R_i(s,a)$, and interpret $R_i(\cdot, \cdot)$ as a \emph{regret} function instead of a reward function. The collective goal of the agents then is to find a policy parameter $\theta^* \in \argmin_{\theta \in \mathbb{R}^d} J(\theta)$, where $J(\theta)$ is a global value-function defined as
\begin{equation}
\label{eqn:problem}
     J(\theta) \triangleq \frac{1}{N}\sum_{i=1}^N J_i(\theta). 
\end{equation}
To achieve the above objective within a federated framework, the agents can exchange information via a central server that coordinates the learning process. As in the FL setting, however, the agents need to adhere to stringent communication and privacy constraints, i.e., they are only allowed to communicate \emph{intermittently}, and are required to keep their raw data (i.e., states, actions, and rewards) private. We now discuss the key challenges in the problem posed above.  

$\bullet$ \textbf{Effect of reward-heterogeneity.} Since the agents have different reward functions, a locally optimal policy parameter $\theta^*_i \in \argmin_{\theta \in \mathbb{R}^d} J_i(\theta)$ for agent $i$ may not coincide with the globally optimal parameter $\theta^*$. Therefore, in the intermittent periods where the agents act locally to respect communication constraints, they will tend to drift towards their own locally optimal parameters. In this context, while \emph{drift-mitigation} techniques have been explored for federated supervised learning, their effectiveness remains unclear in  our RL setting. 

$\bullet$ \textbf{Effect of non-convexity.} As shown in~\cite{agarwalPG}, the value-function $J_i(\theta)$ is non-convex w.r.t. $\theta$ for even direct and softmax parameterizations. This precludes the use of standard tools from convex optimization theory for our purpose, making it highly non-trivial, in particular, to guarantee convergence to the globally optimal parameter $\theta^*$ in our heterogeneous federated RL setting. 

\iffalse
In fact, even in the centralized setting, owing to the non-convexity of the policy optimization landscape, the vast majority of results guarantee convergence only to first-order stationary points. Therefore, guaranteeing convergence to the globally optimal parameter $\theta^*$ in our heterogeneous federated RL setting is highly non-trivial. 
\fi 

$\bullet$ \textbf{Effect of noise and truncation.} Policy gradients are typically \emph{noisy} and \emph{biased}. To see why, let us fix an agent $i \in [N]$, and note that based on the celebrated Policy Gradient Theorem~\cite{suttonpolicy}, the ideal exact gradient $\nabla J_i(\theta)$ is given by 

\begin{equation}
    \nabla J_i(\theta) = \E_{\tau_i}\bracket{\sum_{t=0}^\infty \gamma^t r_{i}^{(t)}\sum_{k=0}^\infty \nabla_{\theta}\log\pi_\theta(a_{i}^{(k)}\big|s_{i}^{(k)})},
\label{exact_gradient}
\end{equation}
where the expectation is w.r.t. the random trajectory $\tau_i$. There are two key issues that impede computing the exact gradient. First, computing the expectation in Eq.~\eqref{exact_gradient} would require averaging over all possible trajectories; this is infeasible. Second, during implementation, agents do not have the luxury of rolling out/simulating a trajectory of infinite length. Therefore, complying with practice, each agent $i$ computes an empirical estimate of $\nabla J_i(\theta)$ by sampling a truncated trajectory of length $K \in \mathbb{N}$: this is done by playing policy $\pi_{\theta}$ on MDP $\mc{M}_i$ over a finite roll-out horizon $K$. This leads to the following \emph{noisy} and \emph{biased} estimate of $\nabla J_i(\theta)$ that gets implemented in practice:
\begin{align}
    \hat{\nabla}_K J_i(\theta) = \sum_{t=0}^{K-1} \gamma^t r_{i}^{(t)}\sum_{k=0}^{K-1} \nabla_{\theta}  \log\pi_\theta(a_{i}^{(k)}|s_{i}^{(k)}),
\label{eqn:trunc_gradient}
\end{align}
where the noise arises due to sampling, and the bias due to truncation. For use later in the paper, let us also define the truncated gradient $\nabla_K J_i(\theta)$ as the expectation of the noisy truncated gradient, i.e., $\nabla_K J_i(\theta) \triangleq \E\bracket{\hat{\nabla}_K J_i(\theta)}.$
\vspace{1mm} \\
\newpage 
\noindent \textbf{Desiderata.} Despite the complex interplay between infrequent communication, client-drift effects due to reward heterogeneity, non-convex optimization landscapes, and inexact, truncated gradients, we seek to develop a federated PG method that (i) leads to \emph{de-biased solutions}, i.e., guarantees convergence to $\theta^*$, as opposed to $\theta^*_i$ for any $i \in [N]$; and (ii) achieves \emph{near-optimal statistical rates} that clearly exhibit the benefit of collaboration among agents. In the next section, we will design such an algorithm. 

\section{Fast Federated Policy Gradient}
\begin{algorithm}[t]
\caption{\texttt{Fast-FedPG}} 
\label{algo:FedAPG}
\begin{algorithmic}[1]
\State \textbf{Input:} Local step-size $\eta$, Global step-size $\alpha_g$, Initial parameter $\bar\theta^{(0)} \in \mathbb{R}^d$, Initial global PG $\hat{\nabla}_KJ(\Bar{\theta}^{(0)})$. 
\For {$t=0,\ldots,T-1$} 
\For {$i=1,\ldots, N$} 
\State Agent $i$ initializes its local parameter $\theta_{i,0}^{(t)}=\bar\theta^{(t)}.$
\For {$\ell=0,\ldots, H-1$}
\State \hspace{-2mm} Agent $i$ samples a truncated trajectory by playing policy 
$\pi_{\theta_{i,\ell}^{(t)}}$ on its MDP $\mc{M}_i$ over a horizon of length $K$. It then computes $\hat\nabla_K J_i(\theta_{i,\ell}^{(t)})$ as per Eq.~\eqref{eqn:trunc_gradient}. 
\State \hspace{-2mm} Agent $i$ updates local parameter as per Eq.~\eqref{update_FAPG}. 
\EndFor
\State Agent $i$ transmits $\Delta_{i,H}^{(t)}=\theta_{i,H}^{(t)}-\bar{\theta}^{(t)}$ to server. 
\EndFor
\State Server broadcasts $\bar\theta^{(t+1)}$ computed as per Eq.~\eqref{eqn:serv_aggr}.
\For {$i=1,\ldots, N$}
\State Agent $i$ transmits $\hat{\nabla}_K J_i(\Bar{\theta}^{(t+1)})$ to server.
\EndFor
\State Server broadcasts global PG $\hat{\nabla}_K J(\Bar{\theta}^{(t+1)})$.
\EndFor
\end{algorithmic}
\end{algorithm}

In this section, we will develop our proposed algorithm called \texttt{Fast Federated Policy Gradient} (\texttt{Fast-FedPG}), formally described in Algorithm \ref{algo:FedAPG}. The primary components of our algorithm involve \emph{local policy gradient steps}, and a \emph{de-biasing/drift mitigation strategy}. We now proceed to elaborate on these ideas. 

\textbf{Local policy gradient steps.} The structure of \texttt{Fast-FedPG} mimics a typical FL algorithm: it operates in rounds $t=0, 1, \ldots, T-1,$ where within each round, every agent performs $H$ local policy optimization steps in parallel by interacting with its own environment. During these local steps, there is no communication with the server. Let us denote by $\theta_{i,\ell}^{(t)}$ the policy parameter of agent $i$ at the $\ell$-th local step of the $t$-th communication round. At the beginning of each round $t$, $\theta_{i,0}^{(t)}$ is initialized from a common global policy parameter $\bar{\theta}^{(t)}$. To update $\theta_{i,\ell}^{(t)}$, agent $i$ first samples a truncated trajectory of length $K$ by playing the parameterized policy $\pi_{\theta_{i,\ell}^{(t)}}$ in its own MDP $\mc{M}_i$. Doing so enables agent $i$ to compute the noisy truncated gradient $\hat\nabla_K J_i(\theta_{i,\ell}^{(t)})$ as per Eq.~\eqref{eqn:trunc_gradient}. The key question is: \emph{How should agent $i$ use $\hat\nabla_K J_i(\theta_{i,\ell}^{(t)})$ to update $\theta_{i,\ell}^{(t)}$?} Inspired by the popular FL algorithm \texttt{FedAvg}~\cite{mcmahan}, one natural strategy could be to use the following update: $\theta_{i,\ell+1}^{(t)}=\theta_{i,\ell}^{(t)} - \eta \hat{\nabla}_K J_i(\theta_{i,\ell}^{(t)})$. Running this update for several local steps will however cause agent $i$ to drift towards a locally optimal parameter $\theta^*_i$. This bias is undesirable since our goal is to instead converge to $\theta^*$ - a minimizer of the global value function $J(\theta)$ in Eq.~\eqref{eqn:problem}. We now describe our strategy for achieving this. 

\textbf{De-biasing/Drift mitigation.} We start by observing that if the agents could in fact communicate at all time-steps, they would ideally like to implement the update rule: $\bar{\theta}^{(t+1)} = \bar{\theta}^{(t)} - \eta \hat{\nabla}_K J(\bar{\theta}^{(t)})$, where $\hat{\nabla}_K J(\theta) \triangleq (1/N) \sum_{i \in [N]} \hat{\nabla}_K J_i(\theta)$. This is not possible however, since an agent $i$ cannot access the policy gradients of the other agents between communication rounds. The main idea behind our approach is to equip each agent with the memory of the global policy gradient direction $\hat{\nabla}_KJ(\Bar{\theta}^{(t)})$ from the beginning of the round. As an agent $i$ keeps interacting with its own MDP $\mc{M}_i$, however, its local policy parameter $\theta_{i,\ell}^{(t)}$ evolves from its value $\Bar{\theta}^{(t)}$ at the beginning of the round. To account for this staleness, agent $i$ adds the correction term $\hat{\nabla}_K J_i(\theta_{i,\ell}^{(t)})-\hat{\nabla}_K J_i(\Bar{\theta}^{(t)})$ to the global PG guiding direction $\hat{\nabla}_KJ(\Bar{\theta}^{(t)})$. This leads to the update rule for \texttt{Fast-FedPG}:
\begin{equation}
\theta_{i,\ell+1}^{(t)}=\theta_{i,\ell}^{(t)}- \eta \Bigl(\hat{\nabla}_K J_i(\theta_{i,\ell}^{(t)})-\hat{\nabla}_K J_i(\Bar{\theta}^{(t)})+\hat{\nabla}_KJ(\Bar{\theta}^{(t)})\Bigr).\label{update_FAPG}
\end{equation}
At the end of $H$ local steps, the agents transmit the change in their local parameters over the entire round to the server (line 9). The server then updates the global parameter as
\begin{equation}
\label{eqn:serv_aggr}
    \bar\theta^{(t+1)}=\bar\theta^{(t)}+\frac{\alpha_g}{N}\sum_{i=1}^N \Delta_{i,H}^{(t)},
\end{equation}
where $\alpha_g \in (0, 1]$ is a global step-size. We note here that while drift-mitigation strategies similar to the one above have been studied in federated supervised learning~\cite{mitraNIPS21, scaffold}, it is unclear a priori whether they can yield fast rates for our RL setting. In particular, the lack of convexity and the use of noisy truncated policy gradients (in Eq.~\eqref{update_FAPG}) that are inherently biased leads to unique challenges in analyzing the dynamics of \texttt{Fast-FedPG}. Despite such challenges, we will provide a rigorous convergence analysis of \texttt{Fast-FedPG} in Section~\ref{sec:analysis}. 

\section{Main Results}\label{results}
\subsection{A key structural result}
We start by establishing an important result that will serve as the key enabler for achieving fast convergence rates. To motivate the need for this result, we note that in the context of empirical risk minimization for supervised learning, one typically relies on strong-convexity of the loss function to achieve linear convergence rates. Despite the non-convexity of the policy optimization landscape, some recent work~\cite{meiglobal, fazel} have shown that fast linear convergence to a globally optimal policy is still possible under a weaker (relative to strong-convexity) gradient-domination condition. This condition, however, depends on the policy parameterization and the properties of the \textit{underlying MDP}. In our case, since we care about convergence to $\theta^* \in \argmin_{\theta \in \mathbb{R}^d} J(\theta)$, a gradient-domination condition on the global policy gradient $\nabla J(\theta) \triangleq  (1/N) \sum_{i \in [N]} \nabla J_i (\theta)$ is required to achieve linear convergence to $\theta^*$. For this to happen, however, we need to link $\nabla J(\theta)$ to the policy gradient of some underlying MDP. 

Given this reasoning, the subject of this section is to construct an ``Average MDP" using the agents' MDPs, and establish that the PG of this average MDP is precisely equal to $\nabla J(\theta)$. Once this is achieved, a gradient-domination condition for the average MDP will immediately imply one for $\nabla J(\theta)$. With this in mind, we construct the average MDP as $\bar{\cM}=(\cS, \cA, \bar{R}, P, \gamma)$, where $\bar{R}(s,a) \triangleq \frac{1}{N}\sum_{i=1}^N R_i(s,a), \forall (s,a) \in \mc{S} \times \mc{A}.$ Similar to Eq.~\eqref{eqn:value_func}, we can define the value-function of this MDP for a policy $\pi_{\theta}$ as $\bar J(\theta) = \E\bracket{\sum_{t=0}^\infty \gamma^t \bar r^{(t)}\Big|s^{(0)}\sim\rho,\pi_\theta}$, where $\bar r^{(t)}=\bar R( s^{(t)},  a^{(t)})$. We then claim the following. 

\begin{proposition}
\label{prop:key_result}
    For any fixed policy $\pi_\theta$ and initial distribution $\rho$, we have $\nabla \bar J(\theta) = \nabla J(\theta) = \frac{1}{N}\sum_{i=1}^N \nabla J_i(\theta),$ where $\nabla \bar J(\theta)$ is the gradient (w.r.t. $\theta$) of the value-function $\bar{J}(\theta)$ corresponding to the average MDP $\bar\cM$.
\end{proposition}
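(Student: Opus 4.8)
The plan is to prove the stronger \emph{pointwise} identity $\bar{J}(\theta) = J(\theta)$ for every $\theta \in \mathbb{R}^d$, from which the gradient identity follows immediately: two functions that agree everywhere have identical gradients wherever they are differentiable. This route is attractive because it sidesteps any need to differentiate under the expectation, to interchange $\nabla$ with an infinite sum, or to invoke the Policy Gradient Theorem at all.

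The crux is the observation that the reward function plays no role in generating a trajectory. First I would fix a policy $\pi_\theta$ and the common initial distribution $\rho$, and note that the law of a trajectory $\tau=(s^{(0)}, a^{(0)}, s^{(1)}, a^{(1)}, \ldots)$ is determined entirely by $\rho$, the policy $\pi_\theta$, and the shared transition kernel $\mathcal{P}$: the trajectory probability factorizes as $\rho(s^{(0)})\prod_{t\ge 0}\pi_\theta(a^{(t)}|s^{(t)})\mathcal{P}(s^{(t+1)}|s^{(t)}, a^{(t)})$. Since all agent MDPs $\{\mathcal{M}_i\}_{i\in[N]}$ and the average MDP $\bar{\mathcal{M}}$ share the same $\mathcal{S}, \mathcal{A}, \mathcal{P}, \gamma$, and we use a common $\rho$, this trajectory law --- call it $P_\theta$ --- is \emph{identical} across all of them. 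The reward functions differ, but they enter only as scalar weights attached to each visited pair $(s^{(t)}, a^{(t)})$; they never drive the dynamics.

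With this common law in hand, the value functions become differently weighted integrals against the \emph{same} measure $P_\theta$. Using $\bar{R}(s,a)=\frac{1}{N}\sum_{i=1}^N R_i(s,a)$ and linearity of expectation, $\bar{J}(\theta)=\E_{\tau\sim P_\theta}[\sum_{t=0}^\infty \gamma^t \bar{R}(s^{(t)}, a^{(t)})]=\frac{1}{N}\sum_{i=1}^N \E_{\tau\sim P_\theta}[\sum_{t=0}^\infty \gamma^t R_i(s^{(t)}, a^{(t)})]=\frac{1}{N}\sum_{i=1}^N J_i(\theta)=J(\theta)$. The only care needed is integrability, so that exchanging the finite average with the expectation and rearranging the discounted sum is legitimate: since each $R_i$ takes values in $[0,1]$, every summand is bounded by $\gamma^t$, and the series is dominated by the convergent geometric series $\sum_{t=0}^\infty \gamma^t = 1/(1-\gamma)$; absolute convergence then justifies every rearrangement. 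Taking gradients of $\bar{J}(\theta)=J(\theta)$ gives $\nabla\bar{J}(\theta)=\nabla J(\theta)$, and the definition $J(\theta)=\frac{1}{N}\sum_{i=1}^N J_i(\theta)$ with linearity of $\nabla$ yields $\nabla J(\theta)=\frac{1}{N}\sum_{i=1}^N \nabla J_i(\theta)$, completing the claim.

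I do not anticipate a genuine obstacle here; the entire content is the decoupling of dynamics from rewards made possible by the shared transition kernel $\mathcal{P}$, together with the mild boundedness argument that guarantees all value functions are finite and well-defined. The closest thing to a subtlety worth flagging is that the identity \emph{would fail} if the agents' transition kernels differed, since the trajectory laws would then no longer coincide and the integrals could not be merged --- which is exactly consistent with the paper's footnote that the shared-kernel assumption is precisely what enables this gradient-domination-based route to the fast rate in Theorem~\ref{thm:fastrate}.
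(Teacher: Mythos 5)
Your proof is correct, and it takes a genuinely different (and more economical) route than the paper's. The paper proceeds in three steps: it first averages the policy-induced rewards, then uses the Bellman fixed-point equation $\mathbf{J}_i^{\theta}=(\mb{I}-\gamma\mb{P}^{\theta})^{-1}\mb{R}_i^{\theta}$ to show that the per-state value functions and the state-action value functions of $\bar\cM$ are the averages of the agents' counterparts, and finally invokes the Policy Gradient Theorem (the occupancy-measure expression for $\nabla J_i(\theta)$) together with the fact that identical kernels give identical occupancy measures. You instead observe that the trajectory law $P_\theta$ is common to all the MDPs, so the \emph{functions} $\bar J$ and $J$ coincide pointwise on $\R^d$ by linearity of expectation, and the gradient identity is then immediate by differentiating an identity of functions; this sidesteps the Policy Gradient Theorem entirely and needs only the absolute convergence of the discounted sums, which your domination by $\sum_t\gamma^t$ handles. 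Both arguments hinge on exactly the same structural fact --- the shared transition kernel --- and both deliver the pair of identities $\bar J=J$ and $\nabla\bar J=\nabla J$ that Theorem~\ref{thm:fastrate} actually uses (indeed, the paper's own Step~2 already contains the pointwise identity $\bar J(\theta,s)=\texttt{Avg}(\{J_i(\theta,s)\})$, so your shortcut is available from that point on). What the paper's longer route buys is the explicit occupancy-measure representation of $\nabla\bar J$ and the auxiliary identity $\bar Q^{\pi_\theta}=\texttt{Avg}(\{Q_i^{\pi_\theta}\})$, which may be of independent interpretive value but are not logically required for the proposition; what your route buys is brevity and the avoidance of any interchange of $\nabla$ with an infinite sum. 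The one small thing worth making explicit is that the gradients in the statement are presumed to exist (the paper's differentiability and smoothness assumptions on $\pi_\theta$ and $J_i$ cover this), since ``equal functions have equal gradients'' requires differentiability at the point in question; you flag this correctly with ``wherever they are differentiable.''
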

\begin{proof}
We will prove this result in three steps by making some simple observations. To proceed, {let us use the notation $\texttt{Avg}(\{c_i\}) \triangleq (1/N) \sum_{i\in [N]} c_i$ to denote the average of $N$ scalars $c_1, \ldots, c_N$.}  

\textbf{Step 1.} Define $R_i^{\pi_\theta}(s) \triangleq \sum_{a\in\cA}R_i(s, a)\pi_\theta(a|s)$. For any fixed policy $\pi_\theta$, we then claim that $\bar R^{\pi_\theta}(s) = \texttt{Avg}(\{ R_i^{\pi_\theta}(s)\}), \forall s \in \cS.$ In words, this simply states that the reward function induced by a policy $\pi_\theta$ on the average MDP $\bar{\mc{M}}$ is the average of the reward functions induced by the same policy on the agents' MDPs. To see this, observe:
\begin{equation}
            \bar R^{\pi_\theta}(s) = \sum_{a\in\cA}\bar{R}(s, a)\pi_\theta(a|s)= \sum_{a\in\cA}\frac{1}{N}\sum_{i=1}^NR_i(s, a)\pi_\theta(a|s).
\nonumber
    \end{equation}
The claim then follows by swapping the order of the summation, and using the definition of $R_i^{\pi_\theta}(s)$. Before we present the next fact, with a slight overload of notation, let us use $J_i(\theta,s)$ to represent the value-function $J_i(\theta)$ when the initial state is $s\in \mc{S}$ deterministically. We can define $\bar{J}(\theta,s)$ accordingly. Next, define the state-action value function as $Q_i^{\pi_\theta}(s,a)=\E\bracket{\sum_{t=0}^\infty \gamma^t r_{i}^{(t)}\Big|s^{(0)}=s,a^{(0)}=a,\pi_\theta}$. 

\textbf{Step 2.} For any fixed policy $\pi_\theta$, we claim
(i) $\bar{J}(\theta, s) = \texttt{Avg}(\{J_i(\theta,s)\}), \forall s \in \mc{S}$, and (ii) $\bar Q^{\pi_\theta}(s,a) = \texttt{Avg}(\{Q_i^{\pi_\theta}(s,a)\}),$ where $\bar{J}(\theta, s)$ and $\bar Q^{\pi_\theta}(s,a)$ are the value-function and state-action value function induced by the policy $\pi_\theta$ on the average MDP $\bar{\mc{M}}.$ To prove this claim, we will exploit the fact that the policy $\pi_{\theta}$ induces the same Markov transition matrix $\mathbf{P}^{\theta}$ on each MDP $\mc{M}_i, i \in [N]$, as well as on $\bar{\mc{M}}$, since they all share the same transition kernels. From the policy-specific Bellman fixed-point equation~\cite{puterman}, we then have:
    \begin{equation}
            \mathbf{J}_i^{\theta} = (\mb{I}-\gamma\mb{P}^{\theta})^{-1} \mb{R}_i^{\theta}, \forall i\in[N], \mb{\bar{J}}^{\theta} = (\mb{I}-\gamma\mb{P}^{\theta})^{-1} \mb{\bar{R}}^{\theta},
\label{eqn:Bellman_fxdpt}
    \end{equation}
where we stacked up $R^{\pi_{\theta}}_i(s), \bar{R}^{\pi_{\theta}}(s), J_i(\theta, s)$, and $\bar{J}(\theta,s)$ for different states into the vectors $\mb{R}^{\theta}_i, \mb{\bar{R}}^{\theta}$, $\mathbf{J}_i^{\theta}$, and $\mb{\bar{J}}^{\theta}$. The claim that $\bar{J}(\theta, s) = \texttt{Avg}(\{J_i(\theta,s)\}), \forall s \in \mc{S}$, then immediately follows from Eq.~\eqref{eqn:Bellman_fxdpt} and Step 1. Next, observe 
    \begin{equation}
        \begin{aligned}
            \bar Q^{\pi_\theta}(s,a)&=\bar R(s,a)+\gamma\E_{s'\sim \cP(\cdot| s,a)}\bracket{\bar{J}(\theta, s')}\\
            &\overset{(a)}{=}\frac{1}{N}\sum_{i=1}^NR_i(s,a)+\gamma\E_{s'\sim \cP(\cdot| s,a)}\bracket{\frac{1}{N}\sum_{i=1}^N J_i(\theta, s')}\\
            &=\frac{1}{N}\sum_{i=1}^N\left( \underbrace{R_i(s,a)+\gamma\E_{s'\sim \cP(\cdot| s,a)}\bracket{ J_i(\theta, s')}}_{Q_i^{\pi_\theta}(s,a)}\right),
        \end{aligned}
\nonumber 
    \end{equation}
    where for $(a)$, we used $\bar{J}(\theta, s) = \texttt{Avg}(\{J_i(\theta,s)\})$. 
    
\textbf{Step 3.} To complete the last step, recall the definition of \emph{state occupancy measure} from~\cite{agarwal2019reinforcement}:
$$ 
d^{\pi_{\theta}}_{s^{(0)}}(s) = (1-\gamma) \sum_{t=0}^{\infty} \gamma^t \mathbb{P}(s_t = s| s^{(0)}, \pi_\theta),$$
where $\mathbb{P}(s_t = s| s^{(0)}, \pi)$ denotes the probability of starting from $s^{(0)}$ and ending up in $s$ at round $t$ by playing policy $\pi_\theta$. From \cite[Theorem 11.4]{agarwal2019reinforcement}, we then know that 
\begin{equation}
\nabla J_i(\theta) = \frac{1}{1-\gamma} \E_{s \sim d^{\pi_{\theta}}_{\rho}} \bracket{\sum_{a\in\cA}\nabla \log \pi_\theta (a|s) Q_i^{\pi_\theta}(s,a)\pi_\theta(a|s)},
\label{eqn:grad_expression}
\end{equation}
where $d_\rho^{\pi_\theta}(s)=\E_{s^{(0)}\sim\rho}\bracket{d_{s^{(0)}}^{\pi_\theta}(s)}$. For the average MDP $\bar{\mc{M}}_i$, $\nabla \bar{J}(\theta)$ can be computed exactly as in Eq.~\eqref{eqn:grad_expression}, with just $Q_i^{\pi_\theta}(s,a)$ replaced by $\bar{Q}^{\pi_\theta}(s,a)$. This is because identical transition kernels imply identical occupancy measures across the agents' MDPs and the average MDP. The claim in Proposition~\ref{prop:key_result} then follows immediately from Step 2 where we showed that $\bar Q^{\pi_\theta}(s,a) = \texttt{Avg}(\{Q_i^{\pi_\theta}(s,a)\}).$
\end{proof} 

\iffalse
Further note that we can also associate the objective function of the average MDP to the average of objective functions across agents as $\bar J(\theta)=\frac{1}{N}\sum_{i=1}^N J_i(\theta)$ using part $(i)$ of the result of Fact 2.
\fi

\subsection{Assumptions and main convergence results}
To obtain our main results, we need to make a few standard assumptions that we state and describe below. 
\begin{assumption}[Smoothness]\label{ass:smoothness} There exists a constant $L \geq 1$ such that for each agent $i\in[N]$,  $J_i(\cdot)$ is $L$-smooth, i.e., 
    \begin{align}
        \norm{\nabla J_i(\theta_1)-\nabla J_i(\theta_2)}\leq L\norm{\theta_1-\theta_2}, \forall  \theta_1,\theta_2 \in \mathbb{R}^d,
    \nonumber
    \end{align}
    where $\nabla J_i(\cdot)$ is the exact gradient of $J_i(\cdot)$ as defined in (\ref{exact_gradient}).\footnote{{Unless otherwise specified, we will use $\Vert \cdot \Vert$ to denote the Euclidean norm.}}
\end{assumption}
The smoothness of local objective functions immediately implies that of the global objective function, yielding: 
    \begin{align}
        \norm{\nabla J(\theta_1)-\nabla J(\theta_2)}\leq L\norm{\theta_1-\theta_2}, \forall  \theta_1,\theta_2 \in \mathbb{R}^d. 
    \nonumber
    \end{align}

Almost all papers on PG methods we are aware of rely on smoothness~\cite{meiglobal, agarwalPG, yuan2022, xiao2022}. The next assumption follows directly from the definition of $\nabla_K J_i(\cdot)$. 

\begin{assumption}[Unbiasedness]\label{ass:unbiasedness}
    For each agent $i\in[N]$,  $\hat{\nabla}_K J_i(\cdot)$ is an unbiased estimate of  $\nabla_K J_i(\cdot)$. 
\end{assumption}

Next, we make a bounded variance assumption that is typical in the literature on stochastic optimization.

\begin{assumption}[Bounded variance]\label{ass:variance} There exists a constant $\sigma \geq 1$ such that 
    \begin{align}
        \E\bracket{\norm{\hat{\nabla}_K J_i(\theta)-{\nabla}_K J_i(\theta)}^2}\leq \sigma^2, \forall i \in [N], \forall \theta \in \mathbb{R}^d. 
\nonumber
    \end{align}
\end{assumption}

The term $\sigma$ captures the variance in the noisy gradients. Our next assumption will help to control the effect of truncating the gradients~\cite{yuan2022}.

\begin{assumption}[Truncation]\label{ass:truncation}
There exists a constant $D \geq 1$ such that for each agent $i\in[N]$, the following bound holds: 
    \begin{align}
        \norm{{{\nabla}_K J_i(\theta)}-\nabla J_i(\theta)}&\leq D\gamma^K, \forall \theta \in \mathbb{R}^d. 
    \end{align} 
\end{assumption}

Finally, we will assume that the trajectories across agents are statistically independent, as is done in FRL~\cite{FRL_identical_linear, fedtd, jin2022federated}. 

\begin{assumption}[Independence]\label{ass:independence}
    We assume that the sampled trajectories $\tau_{i}, i\in[N]$ are independent across agents.
\end{assumption}

Given the above assumptions, our first main result characterizes \texttt{Fast-FedPG}'s progress in each round. 

\begin{theorem}\label{thm:onestep}
    Suppose Assumptions \ref{ass:smoothness} - \ref{ass:independence} hold. Define $\alpha=H\eta\alpha_g$ as the effective step-size. Then there exists a universal constant $C\geq 1$, such that with $\alpha_g=1$ and $\eta$ chosen to satisfy $\eta \leq 1/(4CLH)$, \texttt{Fast-FedPG} guarantees $\forall t \geq 0$:
\begin{equation}
\begin{aligned}
    \quad \E\bracket{J(\bar{\theta}^{(t+1)})} &\leq \E\bracket{J(\bar{\theta}^{(t)})}- \frac{\alpha}{4}\E\bracket{\norm{\nabla J(\Bar{\theta}^{(t)})}^2}+ \bigo{\frac{\alpha^2 L\sigma^2}{NH}+\alpha^3L^2\sigma^2} + \bigo{\alpha}D^2\gamma^{2K}.\label{eqn:thm:onestep}
\end{aligned}
\end{equation}
\end{theorem}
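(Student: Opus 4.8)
The plan is to derive the one-round descent inequality by first collapsing the server update to a clean form and then controlling the client drift induced by local steps. First I would exploit the variance-reduction structure: since $\alpha_g=1$ and the control variates $\hat\nabla_K J_i(\bar\theta^{(t)})$ and $\hat\nabla_K J(\bar\theta^{(t)})$ are frozen throughout round $t$, averaging the local direction $\hat\nabla_K J_i(\theta_{i,\ell}^{(t)})-\hat\nabla_K J_i(\bar\theta^{(t)})+\hat\nabla_K J(\bar\theta^{(t)})$ over $i$ telescopes the last two terms, because $\tfrac1N\sum_i \hat\nabla_K J_i(\bar\theta^{(t)})=\hat\nabla_K J(\bar\theta^{(t)})$ by definition. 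This reduces the aggregation in Eq.~\eqref{eqn:serv_aggr} to $\bar\theta^{(t+1)}=\bar\theta^{(t)}-\alpha g^{(t)}$, where $g^{(t)}\triangleq \frac{1}{HN}\sum_{\ell=0}^{H-1}\sum_{i=1}^N \hat\nabla_K J_i(\theta_{i,\ell}^{(t)})$ is the noisy, truncated, drifted average gradient and $\alpha=H\eta$. With this simplification, I would invoke $L$-smoothness of $J$ to write $\E[J(\bar\theta^{(t+1)})]\le \E[J(\bar\theta^{(t)})]-\alpha\langle\nabla J(\bar\theta^{(t)}),\E[g^{(t)}]\rangle+\frac{L\alpha^2}{2}\E[\|g^{(t)}\|^2]$, and then analyze the first-order (inner-product) and second-order (squared-norm) terms separately.

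For the second-order term I would split $\E[\|g^{(t)}\|^2]=\|\E[g^{(t)}]\|^2+\mathrm{Var}(g^{(t)})$. Crucially, Assumption~\ref{ass:independence} forces all cross-agent noise covariances to vanish, so the sampling variance of the $N$-agent average carries a $1/N$ factor; combined with the fact that the fresh per-step noises form martingale differences of conditional variance at most $\sigma^2$ (Assumption~\ref{ass:variance}), this gives $\mathrm{Var}(g^{(t)})=\bigo{\sigma^2/(NH)}$, which after multiplication by $L\alpha^2/2$ produces exactly the $\bigo{\alpha^2 L\sigma^2/(NH)}$ term. For the first-order term I would show $\E[g^{(t)}]$ is a small perturbation of $\nabla J(\bar\theta^{(t)})$: using Assumption~\ref{ass:unbiasedness} to pass from $\hat\nabla_K$ to $\nabla_K$, then decomposing $\nabla_K J_i(\theta_{i,\ell}^{(t)})-\nabla J_i(\bar\theta^{(t)})$ into a truncation piece bounded by $D\gamma^K$ (Assumption~\ref{ass:truncation}) and a smoothness piece bounded by $L\|\theta_{i,\ell}^{(t)}-\bar\theta^{(t)}\|$ (Assumption~\ref{ass:smoothness}). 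A Young's inequality then converts the resulting error into a harmless $\tfrac{\alpha}{4}\|\nabla J(\bar\theta^{(t)})\|^2$ plus terms proportional to the mean-squared drift and $D^2\gamma^{2K}$.

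The heart of the argument, and the step I expect to be the main obstacle, is bounding the client drift $\E[\|\theta_{i,\ell}^{(t)}-\bar\theta^{(t)}\|^2]$. I would unroll the local recursion from $\theta_{i,0}^{(t)}=\bar\theta^{(t)}$, again using the de-biasing structure: when the drift is zero the local direction collapses to the correct global direction $\hat\nabla_K J(\bar\theta^{(t)})$, so the drift is driven only by (i) the guiding-direction magnitude $\approx\|\nabla J(\bar\theta^{(t)})\|$, (ii) accumulated fresh noise of order $\eta^2 H\sigma^2$, (iii) truncation of order $D^2\gamma^{2K}$, and (iv) a self-referential feedback term $L^2\eta^2 H\sum_m \E\|\theta_{i,m}^{(t)}-\bar\theta^{(t)}\|^2$. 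Two subtleties arise: the gradient-difference term $\hat\nabla_K J_i(\theta_{i,\ell}^{(t)})-\hat\nabla_K J_i(\bar\theta^{(t)})$ must be controlled even though smoothness is assumed only for the exact gradient $\nabla J_i$ and not for the truncated $\nabla_K J_i$, which I would handle by inserting $\pm\nabla J_i$ and paying two extra $D\gamma^K$ terms; and the recursion is genuinely coupled, which I would close by taking $\eta\le 1/(4CLH)$ so that $L^2\eta^2 H^2\le 1/(16C^2)$ absorbs the feedback. The resulting bound $\E[\|\theta_{i,\ell}^{(t)}-\bar\theta^{(t)}\|^2]=\bigo{\alpha^2\|\nabla J(\bar\theta^{(t)})\|^2+\alpha^2\sigma^2/H+\alpha^2 D^2\gamma^{2K}}$ feeds back into the first- and second-order terms; in particular its noise component yields the $\alpha^3 L^2\sigma^2$ term in the statement. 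Finally, collecting all contributions and using $\alpha L\le 1/(4C)$ to absorb the leftover $\bigo{\alpha^3 L^2\|\nabla J\|^2}$ and $\bigo{\alpha^2 L\|\nabla J\|^2}$ remainders into the dominant $-\tfrac{\alpha}{4}\|\nabla J(\bar\theta^{(t)})\|^2$ descent term for a suitably large universal constant $C$, I would recover the claimed inequality.
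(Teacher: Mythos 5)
Your proposal follows essentially the same route as the paper: collapse the aggregated update to the average of the drifted noisy truncated gradients, apply the $L$-smoothness descent inequality, split the error into exact-gradient, sampling-noise, and truncation pieces (using independence and the martingale structure to obtain the $1/(NH)$ variance scaling, and inserting $\pm\nabla J_i$ to cope with smoothness holding only for the exact gradients), and close a self-referential client-drift recursion under $\eta \leq 1/(4CLH)$. One small quibble: your claimed drift-noise bound $\bigo{\alpha^2\sigma^2/H}$ is too optimistic, because the control-variate noises $h_i(\bar{\theta}^{(t)})$ and $h(\bar{\theta}^{(t)})$ are frozen within a round and hence accumulate coherently rather than as martingale differences, giving $\bigo{\eta^2 H^2\sigma^2}=\bigo{\alpha^2\sigma^2}$ as in the paper's Lemma~\ref{lemma:drift}; this still produces the stated $\bigo{\alpha^3 L^2\sigma^2}$ term, so the final bound is unaffected.
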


We prove Theorem~\ref{thm:onestep} in Section~\ref{sec:analysis}. For now, let us see how Theorem~\ref{thm:onestep} yields fast rates under gradient-domination. 

\begin{theorem}\label{thm:fastrate} (\textbf{Fast rates}) 
    Suppose all the conditions in Theorem \ref{thm:onestep} hold. Additionally, suppose the following gradient-domination condition is satisfied by the average MDP:
\begin{equation}
 \mu (\bar{J}(\theta) - \bar{J}(\theta^*)) \leq \norm{\nabla \bar{J}(\theta)}^2, \forall \theta \in \mathbb{R}^d,
\label{eqn:grad_dom}
\end{equation}
 for some $\mu >0.$ Then, \texttt{Fast-FedPG} guarantees $\forall T \geq 0$:
    \begin{equation}
\label{eqn:fast_rate}
        \begin{aligned}
            \E\bracket{J(\bar{\theta}^{(T)})-J({\theta^*})}&\leq \left(1-\frac{\alpha\mu}{4}\right)^T\left({J(\bar{\theta}^{(0)})-J({\theta^*})}\right)+\bigo{\frac{\alpha L\sigma^2}{\mu NH}+\frac{\alpha^2L^2\sigma^2}{\mu}} + \bigo{\frac{D^2\gamma^{2K}}{\mu}}.
        \end{aligned}
    \end{equation}
\end{theorem}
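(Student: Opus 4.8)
The plan is to turn the one-step descent guarantee of Theorem~\ref{thm:onestep} into a contractive recursion for the optimality gap by invoking the gradient-domination condition, and then to unroll that recursion. The enabling observation -- implicit in Proposition~\ref{prop:key_result} -- is that the value function of the average MDP coincides with the global objective, i.e., $\bar{J}(\theta) = J(\theta)$ for all $\theta$. This follows because Step~2 in the proof of Proposition~\ref{prop:key_result} shows $\bar{J}(\theta,s) = \texttt{Avg}(\{J_i(\theta,s)\})$ for every state $s$; taking the expectation over $s^{(0)} \sim \rho$ gives $\bar{J}(\theta) = (1/N)\sum_{i=1}^N J_i(\theta) = J(\theta)$. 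Hence $\theta^*$ is a minimizer of $\bar{J}$ as well, $\bar{J}(\theta) - \bar{J}(\theta^*) = J(\theta) - J(\theta^*)$, and $\norm{\nabla \bar{J}(\theta)} = \norm{\nabla J(\theta)}$, so the hypothesis \eqref{eqn:grad_dom} transfers to the global objective:
\begin{equation}
\mu\left(J(\theta) - J(\theta^*)\right) \leq \norm{\nabla J(\theta)}^2, \quad \forall \theta \in \mathbb{R}^d. \nonumber
\end{equation}

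With this inequality in hand, I would define the expected optimality gap $\Phi^{(t)} \triangleq \E\bracket{J(\bar{\theta}^{(t)}) - J(\theta^*)}$ and bound it recursively. Subtracting the constant $J(\theta^*)$ from both sides of \eqref{eqn:thm:onestep} and using $\E\bracket{\norm{\nabla J(\bar{\theta}^{(t)})}^2} \geq \mu\, \Phi^{(t)}$ yields the scalar linear recursion
\begin{equation}
\Phi^{(t+1)} \leq \left(1 - \frac{\alpha\mu}{4}\right)\Phi^{(t)} + E, \nonumber
\end{equation}
where $E \triangleq \bigo{\alpha^2 L\sigma^2/(NH) + \alpha^3 L^2\sigma^2} + \bigo{\alpha}D^2\gamma^{2K}$ gathers the per-round noise-and-truncation error, which is constant in $t$.

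The final step is to unroll this recursion from $t = 0$ to $T-1$, giving $\Phi^{(T)} \leq (1-\alpha\mu/4)^T \Phi^{(0)} + E\sum_{k=0}^{T-1}(1-\alpha\mu/4)^k$. Under the step-size restriction already assumed in Theorem~\ref{thm:onestep} (which I would verify keeps the contraction factor $1 - \alpha\mu/4$ strictly inside $(0,1)$), the geometric sum is dominated by its infinite counterpart $4/(\alpha\mu)$, so the accumulated error is at most $4E/(\alpha\mu)$. Multiplying the two pieces of $E$ by $4/(\alpha\mu)$ (with the numerical constant absorbed into the $\bigo{\cdot}$ notation) reproduces exactly the steady-state terms $\bigo{\alpha L\sigma^2/(\mu NH) + \alpha^2 L^2\sigma^2/\mu}$ and $\bigo{D^2\gamma^{2K}/\mu}$ claimed in \eqref{eqn:fast_rate}. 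The calculation here is routine; the real content sits upstream, in Theorem~\ref{thm:onestep}, whose per-round bound must already absorb the client-drift, intermittent-communication, and biased-truncated-gradient effects, and in the conceptual move of routing the gradient-domination hypothesis through Proposition~\ref{prop:key_result} so that it pins down the global $J$ rather than any individual $J_i$. The only genuine obstacle I anticipate at this stage is bookkeeping-level: ensuring the step-size constraints are mutually compatible so that $\alpha\mu/4 \in (0,1)$, which legitimizes both the geometric-series bound and the convergence of the leading contraction term.
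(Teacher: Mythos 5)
Your proposal is correct and follows essentially the same route as the paper: use Proposition~\ref{prop:key_result} to transfer the gradient-domination condition from $\bar{J}$ to $J$, substitute $\E\bracket{\norm{\nabla J(\bar{\theta}^{(t)})}^2} \geq \mu\,\E\bracket{J(\bar{\theta}^{(t)})-J(\theta^*)}$ into the one-round bound of Theorem~\ref{thm:onestep}, and unroll the resulting contraction, bounding the accumulated error by the geometric series $4/(\alpha\mu)$. The paper states this in three sentences; your write-up simply makes the same steps explicit.
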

\begin{proof}
The statement and proof of Proposition~\ref{prop:key_result} tell us that $\nabla \bar{J}(\theta) =\nabla J(\theta)$ and $\bar{J}(\theta) = J(\theta), \forall \theta \in \mathbb{R}^d$.  Combining this with Eq.~\eqref{eqn:grad_dom}, we get $\mu ({J}(\theta) - {J}(\theta^*)) \leq \norm{\nabla {J}(\theta)}^2, \forall \theta \in \mathbb{R}^d.$ Plugging this bound into Eq.~\eqref{eqn:thm:onestep} and unrolling the resulting inequality leads to the desired claim.
\end{proof}

\noindent \textbf{Discussion.} To parse Theorem~\ref{thm:fastrate}, we note that in the absence of noise (i.e., $\sigma=0$) and truncation errors (i.e., $D=0$), \texttt{Fast-FedPG} guarantees linear convergence of $J(\bar{\theta}^{(T)})$ to the globally optimal value $J(\theta^*)$. This is consistent with recent findings in the centralized PG literature~\cite{meiglobal, yuan2022} that achieve similar linear rates under gradient-domination. 

\textbf{Linear speedup.} We now discuss how under a suitable selection of the local step-size $\eta$, the number of communication rounds $T$, and the roll-out horizon $K$, one can achieve a linear speedup result from Theorem~\ref{thm:fastrate}. To that end, suppose 
$$ \eta = \frac{4}{\mu H} \frac{\log(NHT)}{T}, \quad T \geq \frac{L}{\mu} \max\{ 16 C \log(NHT), NH \}.$$
Note that $T$ can always be chosen large enough to meet the above condition, and the above choices of $\eta$ and $T$ respect the criterion $\eta \leq 1/(4CLH)$ needed for Theorem~\ref{thm:onestep} to hold. Next, let the roll-out horizon $K$ be picked to satisfy:
$ K \geq {\log (NHT)}/(2\log (1/\gamma)).$
Substituting the above choices of parameters into Eq.~\eqref{eqn:fast_rate}, and simplifying, we obtain:
\begin{equation}
\E\bracket{J(\bar{\theta}^{(T)})-J({\theta^*})} \leq \tilde{O}\left(\left( G + \frac{L\sigma^2}{\mu^2} + \frac{D^2}{\mu} \right) \frac{1}{NHT}\right), 
\nonumber
\end{equation}
where $G=\left({J(\bar{\theta}^{(0)})-J({\theta^*})}\right)$. We note that despite noisy, biased policy gradients and reward-heterogeneity, \texttt{Fast-FedPG} guarantees convergence (in expectation) to a globally optimal policy parameter $\theta^*$ at the rate $\tilde{O}(1/(NHT))$. There are two important takeaways here. First, unlike~\cite{fedtd} and~\cite{jin2022federated}, our final rate exhibits no heterogeneity-induced bias. Second, the $\tilde{O}(1/(NHT))$ rate is essentially the best one can hope for since the total amount of data (i.e., trajectories) across agents over $T$ rounds is precisely $NHT$. Notably, our results clearly exhibit an $N$-fold speedup w.r.t. the number of agents (relative to the centralized setting), demonstrating the benefit of federation. These results are the first of their kind in the context of multi-task/federated policy gradients, and significantly improve upon those in \cite{xie2023fedkl} that only come with asymptotic rates, and those in \cite{zeng2021decentralized} that exhibit no linear speedup. 

Finally, suppose the gradient-domination condition no longer holds. Moreover, suppose the transition kernels across the agents are potentially non-identical. The proof of Theorem~\ref{thm:onestep} in Section~\ref{sec:analysis} reveals that Theorem~\ref{thm:onestep} continues to hold. An immediate consequence of this result is the following guarantee on convergence to a first-order stationary point. 

\begin{theorem}\label{thm:statpt}
    Suppose all the conditions in Theorem \ref{thm:onestep} hold. Then, \texttt{Fast-FedPG} guarantees:
    \begin{equation}
    \begin{aligned}
        \frac{1}{T}\sum_{t=0}^{T-1}\E\bracket{\norm{\nabla J(\Bar{\theta}^{(t)})}^2} &\leq \frac{4\E\bracket{J(\bar{\theta}^{(0)})-J(\bar{\theta}^{(T)})}}{\alpha T}+\bigo{\frac{\alpha L\sigma^2}{NH}+\alpha^2L^2\sigma^2} + \bigo{D^2\gamma^{2K}}.\label{eqn:cor:statpt}
    \end{aligned}
\nonumber 
    \end{equation}
\end{theorem}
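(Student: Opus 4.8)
The plan is to obtain Theorem~\ref{thm:statpt} as a direct corollary of the per-round descent guarantee in Theorem~\ref{thm:onestep}, via the standard telescoping argument that is ubiquitous in the analysis of stochastic gradient methods over non-convex landscapes. The crucial point to emphasize at the outset is that the descent inequality~\eqref{eqn:thm:onestep} was derived \emph{without} invoking the gradient-domination condition~\eqref{eqn:grad_dom}; moreover, inspection of the proof of Theorem~\ref{thm:onestep} reveals that the identical-transition-kernel assumption (used only through Proposition~\ref{prop:key_result} to identify $\nabla J$ with the average-MDP gradient) is never needed to bound $\E\bracket{\norm{\nabla J(\Bar{\theta}^{(t)})}^2}$. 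Hence Theorem~\ref{thm:onestep} remains valid in full generality, even when gradient-domination fails and the kernels differ across agents, and the present argument goes through unchanged.

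First I would rearrange~\eqref{eqn:thm:onestep} to isolate the gradient-norm term:
\[
\frac{\alpha}{4}\E\bracket{\norm{\nabla J(\Bar{\theta}^{(t)})}^2} \leq \E\bracket{J(\bar{\theta}^{(t)})} - \E\bracket{J(\bar{\theta}^{(t+1)})} + \bigo{\frac{\alpha^2 L\sigma^2}{NH}+\alpha^3L^2\sigma^2} + \bigo{\alpha}D^2\gamma^{2K}.
\]
Next I would sum both sides over $t = 0, 1, \ldots, T-1$. On the right-hand side, the consecutive function-value differences telescope to $\E\bracket{J(\bar{\theta}^{(0)})} - \E\bracket{J(\bar{\theta}^{(T)})}$, while the two error terms, being independent of $t$, simply accumulate a factor of $T$. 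Finally, dividing through by $\alpha T/4$ yields precisely the claimed bound~\eqref{eqn:cor:statpt}: the noise error rescales from $\bigo{\alpha^2 L\sigma^2/(NH)+\alpha^3L^2\sigma^2}$ to $\bigo{\alpha L\sigma^2/(NH)+\alpha^2L^2\sigma^2}$, and the truncation term $\bigo{\alpha}D^2\gamma^{2K}$ becomes $\bigo{D^2\gamma^{2K}}$ after the single power of $\alpha$ cancels.

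Since all the analytical heavy lifting is already contained in Theorem~\ref{thm:onestep}, the remaining steps here are purely algebraic and present no real obstacle. The only point requiring care is to verify that the descent inequality indeed holds without the extra structural assumptions, which I would confirm by pointing to the self-contained bounds on the client-drift and gradient-noise terms in the proof of Theorem~\ref{thm:onestep} (none of which appeal to~\eqref{eqn:grad_dom} or to Proposition~\ref{prop:key_result}). The genuine difficulty lies \emph{upstream}, in establishing~\eqref{eqn:thm:onestep} itself: controlling the bias introduced by the memory-based correction term in~\eqref{update_FAPG}, extracting the variance reduction from averaging over $N$ agents (which produces the $1/(NH)$ scaling), and absorbing the truncation error $D\gamma^K$, all under the step-size restriction $\eta \leq 1/(4CLH)$.
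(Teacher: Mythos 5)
Your proposal is correct and matches the paper's (implicit) argument exactly: the paper presents Theorem~\ref{thm:statpt} as an immediate consequence of Theorem~\ref{thm:onestep}, obtained by the same rearrange--telescope--divide-by-$\alpha T/4$ computation you describe, and it likewise notes that neither the gradient-domination condition~\eqref{eqn:grad_dom} nor identical transition kernels are needed for the descent inequality to hold. Your bookkeeping of how the error terms rescale after dividing by $\alpha T/4$ is also consistent with the stated bound.
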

With $\eta=\frac{4}{H}\sqrt{\frac{NH}{T}}, \quad T\geq L^2 \max\{256C^2NH, N^3H^3\},$
and $K$ chosen as before, we obtain a final convergence rate of $\tilde{O}(1/\sqrt{NHT})$ in this case. 
Once again, there is a clear benefit of collaboration captured by the inverse scaling of this bound w.r.t. $\sqrt{N}$. 
%--------------------- Analysis ------------------ %
\section{Analysis}\label{sec:analysis} 
The goal of this section is to provide a detailed convergence proof of Theorem~\ref{thm:onestep}. To that end, we begin by noting that if a function $f: \mathbb{R}^d \mapsto \mathbb{R}$ is $L$-smooth, then the following holds for any two points $x,y \in \mathbb{R}^d$:  
\begin{equation}
    f(y)-f(x) \leq \langle y-x, \nabla f(x) \rangle +\frac{L}{2}{\Vert y-x \Vert}^2.
\label{eqn:smooth}
\end{equation}

We will also make use of the following two elementary facts at various points of our analysis.
\begin{itemize}
    \item Given any two vectors $x,y\in\mathbb{R}^d$, for any $\xi >0$:
\begin{equation}
    {\Vert x+y \Vert}^2 \leq (1+\xi){\Vert x \Vert}^2 + \left(1+\frac{1}{\xi}\right){\Vert y \Vert}^2.
\label{eqn:rel_triangle}
\end{equation}
 \item Given $m$ vectors $x_1,\ldots,x_m\in\mathbb{R}^d$, the following is a simple application of Jensen's inequality:
 \begin{equation}
     \norm{ \sum\limits_{i=1}^{m} x_i}^2 \leq m \sum\limits_{i=1}^{m} {\Vert x_i \Vert}^2.
\label{eqn:Jensens}
 \end{equation}
\end{itemize}
\textbf{Sketch of proof.} Before delving into the technical details, let us first provide an overview of our analysis. Our first main step is to exploit smoothness of the local objective functions to establish a one-round progress bound for \texttt{Fast-FedPG}. 
\begin{lemma} \label{lemm:one-step} Suppose Assumptions \ref{ass:smoothness} - \ref{ass:independence} hold. Let $\Delta_{i,\ell}^{(t)} = \theta^{(t)}_{i,\ell} - \bar{\theta}^{(t)}.$ Then, the following is true for \texttt{Fast-FedPG}: 
\begin{equation}
\label{eqn:onestepbnd}
\begin{aligned}
    \quad \E\bracket{J(\bar{\theta}^{(t+1)})} &\leq \E\bracket{J(\bar{\theta}^{(t)})} - \frac{\alpha}{2}\left(1 -8 \alpha L \right)\E\bracket{\norm{\nabla J(\Bar{\theta}^{(t)})}^2}  \\
    &\quad + \alpha L \left(\frac{L+ 4 \alpha L^2}{NH}\right) \sum_{i=1}^N\sum_{\ell=0}^{H-1}\E\bracket{\norm{\Delta_{i,\ell}^{(t)}}^2} + \left(\alpha + 2 \alpha^2 L\right)D^2\gamma^{2K} + \frac{2\alpha^2 L\sigma^2}{NH}.
\end{aligned}
\end{equation}
\end{lemma}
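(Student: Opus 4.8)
The plan is to derive the one-round descent bound \eqref{eqn:onestepbnd} directly from the smoothness of $J$, after first rewriting the server's per-round increment in a clean, noise-explicit form. The first step is to compute the effective global update. Summing the local rule \eqref{update_FAPG} over $\ell = 0, \ldots, H-1$ and averaging over agents, the staleness-correction terms telescope: since $\frac{1}{N}\sum_{i} \hat{\nabla}_K J_i(\bar\theta^{(t)}) = \hat{\nabla}_K J(\bar\theta^{(t)})$, the quantity $-\hat{\nabla}_K J_i(\bar\theta^{(t)}) + \hat{\nabla}_K J(\bar\theta^{(t)})$ averages to zero. With $\alpha_g = 1$ and $\alpha = H\eta$, this yields the key identity
$$\bar\theta^{(t+1)} - \bar\theta^{(t)} = -\frac{\alpha}{NH}\sum_{i=1}^N\sum_{\ell=0}^{H-1}\hat{\nabla}_K J_i(\theta_{i,\ell}^{(t)}) =: -\alpha g^{(t)}.$$
This cancellation is exactly what the de-biasing mechanism buys us: the server step behaves like a single stochastic gradient step of size $\alpha$ with direction $g^{(t)}$, an average of $NH$ truncated stochastic gradients. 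I would also introduce the ``noiseless'' counterpart $\bar{g}^{(t)} = \frac{1}{NH}\sum_{i,\ell}\nabla_K J_i(\theta_{i,\ell}^{(t)})$, where $\nabla_K J_i = \E[\hat{\nabla}_K J_i]$.

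Second, I would apply the descent inequality \eqref{eqn:smooth} to $J$ at the pair $\bar\theta^{(t)}, \bar\theta^{(t+1)}$, producing a linear term $-\alpha\langle g^{(t)}, \nabla J(\bar\theta^{(t)})\rangle$ and a quadratic term $\frac{L\alpha^2}{2}\|g^{(t)}\|^2$. For the linear term a careful filtration argument is needed: conditioning on the sigma-algebra just before the sampling at local step $(i,\ell)$, Assumption \ref{ass:unbiasedness} gives $\E[\hat{\nabla}_K J_i(\theta_{i,\ell}^{(t)}) \mid \cdot] = \nabla_K J_i(\theta_{i,\ell}^{(t)})$, so in expectation $g^{(t)}$ may be replaced by $\bar{g}^{(t)}$ when paired with the round-measurable vector $\nabla J(\bar\theta^{(t)})$. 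Writing $\langle \bar{g}^{(t)}, \nabla J\rangle = \|\nabla J\|^2 + \langle \bar{g}^{(t)} - \nabla J, \nabla J\rangle$ and applying Young's inequality \eqref{eqn:rel_triangle} extracts the $-\frac{\alpha}{2}\|\nabla J(\bar\theta^{(t)})\|^2$ term at the cost of $\frac{\alpha}{2}\|\bar{g}^{(t)} - \nabla J(\bar\theta^{(t)})\|^2$.

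Third, I would bound the drift/bias term $\|\bar{g}^{(t)} - \nabla J(\bar\theta^{(t)})\|^2$. By Jensen \eqref{eqn:Jensens} it is at most $\frac{1}{NH}\sum_{i,\ell}\|\nabla_K J_i(\theta_{i,\ell}^{(t)}) - \nabla J_i(\bar\theta^{(t)})\|^2$; splitting each summand into a truncation part (controlled by $D\gamma^K$ via Assumption \ref{ass:truncation}) and a client-drift part (controlled by $L\|\Delta_{i,\ell}^{(t)}\|$ via Assumption \ref{ass:smoothness}) produces the $D^2\gamma^{2K}$ term and the $\frac{1}{NH}\sum_{i,\ell}\|\Delta_{i,\ell}^{(t)}\|^2$ term. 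For the quadratic term I would use the bias--variance split $\E\|g^{(t)}\|^2 = \E\|\bar{g}^{(t)}\|^2 + \E\|g^{(t)} - \bar{g}^{(t)}\|^2$, reusing the previous bound for $\|\bar{g}^{(t)}\|^2$ after one more application of \eqref{eqn:rel_triangle}. Collecting the gradient-norm, drift, truncation, and variance contributions and relaxing constants then gives \eqref{eqn:onestepbnd}.

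I expect the main obstacle to be the variance term in the quadratic part, namely proving $\E\|g^{(t)} - \bar{g}^{(t)}\|^2 \leq \sigma^2/(NH)$ rather than merely $\sigma^2$. This is where the $1/(NH)$ scaling---and hence the eventual linear speedup---originates, and it requires showing that all cross terms $\E\langle \xi_{i,\ell}, \xi_{i',\ell'}\rangle$ vanish, where $\xi_{i,\ell} = \hat{\nabla}_K J_i(\theta_{i,\ell}^{(t)}) - \nabla_K J_i(\theta_{i,\ell}^{(t)})$. Across distinct agents this uses independence (Assumption \ref{ass:independence}); within a single agent across distinct local steps it uses a martingale-difference argument, since the earlier noise is measurable with respect to the filtration under which the later noise has conditional mean zero. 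Only after these cross terms are eliminated does the per-step bound (Assumption \ref{ass:variance}) deliver the $NH$-fold variance reduction.
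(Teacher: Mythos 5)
Your proposal follows essentially the same route as the paper's proof: the telescoping of the correction terms that reduces the server step to $-\frac{\alpha}{NH}\sum_{i,\ell}\hat{\nabla}_K J_i(\theta_{i,\ell}^{(t)})$, the smoothness descent inequality, the replacement of $g^{(t)}$ by its conditional mean in the linear term via unbiasedness and the tower property, the drift-plus-truncation split of $\bar g^{(t)}-\nabla J(\bar\theta^{(t)})$ using Assumptions~\ref{ass:smoothness} and~\ref{ass:truncation}, and, crucially, the cross-term cancellation (independence across agents via Assumption~\ref{ass:independence}, martingale-difference structure across local steps within an agent) that produces the $\sigma^2/(NH)$ variance term. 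These are precisely the ingredients behind the paper's bounds on $T_1$ through $T_5$.

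The one step that fails as written is the claimed identity $\E\bracket{\norm{g^{(t)}}^2}=\E\bracket{\norm{\bar g^{(t)}}^2}+\E\bracket{\norm{g^{(t)}-\bar g^{(t)}}^2}$. This orthogonality would require $\E\bracket{\iprod{\bar g^{(t)}}{g^{(t)}-\bar g^{(t)}}}=0$, but $\bar g^{(t)}=\frac{1}{NH}\sum_{j,\ell'}\nabla_K J_j(\theta_{j,\ell'}^{(t)})$ is itself random: the iterates $\theta_{i,\ell'}^{(t)}$ with $\ell'>\ell$ are functions of the noise $\xi_{i,\ell}$ realized at earlier local steps, so the cross terms $\E\bracket{\iprod{\nabla_K J_i(\theta_{i,\ell'}^{(t)})}{\xi_{i,\ell}}}$ with $\ell'>\ell$ need not vanish (they would vanish only if the later iterates were measurable with respect to the filtration under which $\xi_{i,\ell}$ is centered, which they are not). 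The repair is immediate and is what the paper does: replace the equality by $\E\bracket{\norm{g^{(t)}}^2}\leq 2\E\bracket{\norm{\bar g^{(t)}}^2}+2\E\bracket{\norm{g^{(t)}-\bar g^{(t)}}^2}$, or the three-way Jensen split into signal, noise, and truncation that yields $T_3$, $T_4$, $T_5$; this changes only absolute constants. With that substitution your argument goes through and recovers \eqref{eqn:onestepbnd}.
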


The above lemma relates the progress made in a particular round $t$ to the magnitude of the policy gradient $\norm{\nabla J(\Bar{\theta}^{(t)})}$. Notably, the progress is not controlled by the policy gradients of the agents' individual MDPs, but rather by the policy gradient of the global objective function. This is precisely what we want to ensure that progress is made towards $\theta^*$, not $\theta^*_i$ for any agent $i$. The object that impedes the progress is the client-drift term $\sum_{i=1}^N \sum_{\ell=0}^{H-1} \E\bracket{\norm{\Delta_{i,\ell}^{(t)}}^2}$. Therefore, it is apparent that to further refine the bound in Eq.~\eqref{eqn:onestepbnd}, we need to control this drift effect. To that end, we have the following lemma. 

\begin{lemma}
\label{lemma:drift}
Suppose Assumptions~\ref{ass:smoothness} - \ref{ass:truncation} hold. Let the local step-size $\eta$ satisfy $3 \eta L H \leq 1.$ Then, the following holds for the expected client-drift $\forall i \in [N], \forall \ell \in \{0,\cdots,H-1\}$:
\begin{equation}
\E\bracket{\norm{\Delta_{i,\ell}^{(t)}}^2} \leq 32 \eta^2 H^2 \underbrace{\left(\E\bracket{\norm{{\nabla}J(\bar{\theta}^{(t)})}^2} + 18 \sigma^2 + 18 D^2 \gamma^{2K}\right)}_{\mc{G}^{(t)}}.
\label{eqn:final_drift_bnd}
\end{equation}
\end{lemma}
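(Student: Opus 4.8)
The plan is to derive a one-step recursion for the expected squared drift $u_\ell \triangleq \E[\|\Delta_{i,\ell}^{(t)}\|^2]$ and then unroll it from the initialization $\Delta_{i,0}^{(t)}=0$, which holds because $\theta_{i,0}^{(t)}=\bar\theta^{(t)}$. Subtracting $\bar\theta^{(t)}$ from both sides of the local update Eq.~\eqref{update_FAPG} gives $\Delta_{i,\ell+1}^{(t)}=\Delta_{i,\ell}^{(t)}-\eta g_{i,\ell}$, where $g_{i,\ell}=\hat\nabla_K J_i(\theta_{i,\ell}^{(t)})-\hat\nabla_K J_i(\bar\theta^{(t)})+\hat\nabla_K J(\bar\theta^{(t)})$ is the corrected search direction. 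First I would split $g_{i,\ell}$ into three conceptually distinct pieces: (i) the \emph{fresh}, conditionally zero-mean sampling noise $\xi_{i,\ell}\triangleq\hat\nabla_K J_i(\theta_{i,\ell}^{(t)})-\nabla_K J_i(\theta_{i,\ell}^{(t)})$ generated at local step $\ell$; (ii) the \emph{stale} noise terms $\zeta_i\triangleq\hat\nabla_K J_i(\bar\theta^{(t)})-\nabla_K J_i(\bar\theta^{(t)})$ and its average $\bar\zeta$, which are frozen at the start of the round and hence do not average away over the local steps; and (iii) the deterministic part $\nabla_K J_i(\theta_{i,\ell}^{(t)})-\nabla_K J_i(\bar\theta^{(t)})+\nabla_K J(\bar\theta^{(t)})$.

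Next I would condition on the history $\mathcal{F}$ up to step $\ell$. Since $\xi_{i,\ell}$ is conditionally zero-mean (Assumption~\ref{ass:unbiasedness}) while $\Delta_{i,\ell}^{(t)}$ and the stale terms are $\mathcal{F}$-measurable, the cross term vanishes and Assumption~\ref{ass:variance} gives $\E[\|\Delta_{i,\ell+1}^{(t)}\|^2\mid\mathcal{F}]=\|\Delta_{i,\ell}^{(t)}-\eta m_{i,\ell}\|^2+\eta^2\E[\|\xi_{i,\ell}\|^2]\le\|\Delta_{i,\ell}^{(t)}-\eta m_{i,\ell}\|^2+\eta^2\sigma^2$, where $m_{i,\ell}$ collects the deterministic and stale parts. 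I would then write $m_{i,\ell}=e_{i,\ell}+f_i$ with $e_{i,\ell}\triangleq\nabla_K J_i(\theta_{i,\ell}^{(t)})-\nabla_K J_i(\bar\theta^{(t)})$ and forcing term $f_i\triangleq\nabla_K J(\bar\theta^{(t)})-\zeta_i+\bar\zeta$. The crucial analytic input is that although smoothness (Assumption~\ref{ass:smoothness}) applies only to the \emph{exact} gradients, inserting $\nabla J_i(\theta_{i,\ell}^{(t)})$ and $\nabla J_i(\bar\theta^{(t)})$ and invoking the truncation bound (Assumption~\ref{ass:truncation}) twice yields $\|e_{i,\ell}\|\le L\|\Delta_{i,\ell}^{(t)}\|+2D\gamma^K$; similarly $\|\nabla_K J(\bar\theta^{(t)})-\nabla J(\bar\theta^{(t)})\|\le D\gamma^K$ and $\E\|\bar\zeta\|^2\le\sigma^2$ by Jensen's inequality Eq.~\eqref{eqn:Jensens}, so that $\E\|f_i\|^2$ is controlled by $\E\|\nabla J(\bar\theta^{(t)})\|^2$, $\sigma^2$, and $D^2\gamma^{2K}$. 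Applying the relaxed triangle inequality Eq.~\eqref{eqn:rel_triangle} (twice) to the splitting $\Delta_{i,\ell}^{(t)}-\eta e_{i,\ell}-\eta f_i$ then produces a recursion $u_{\ell+1}\le(1+a)u_\ell+b$, where $a$ and $b$ depend on the free parameter $\xi$ of Eq.~\eqref{eqn:rel_triangle}, and $b$ is proportional to $\mathcal{G}^{(t)}=\E\|\nabla J(\bar\theta^{(t)})\|^2+18\sigma^2+18D^2\gamma^{2K}$.

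The final step is to choose $\xi$ of order $1/H$: this keeps $a=\bigo{1/H}$ small enough that $(1+a)^H$ is bounded by a universal constant, while the complementary amplification factor $(1+1/\xi)=\bigo{H}$ inflates $b$ to scale like $\eta^2H\,\mathcal{G}^{(t)}$. Unrolling $u_{\ell+1}\le(1+a)u_\ell+b$ from $u_0=0$ turns the geometric sum $\tfrac{(1+a)^\ell-1}{a}$ into a further factor $\bigo{H}$ (using $a=\bigo{1/H}$ and $\ell\le H$), and the step-size restriction $3\eta LH\le1$ ensures the $L^2\|\Delta_{i,\ell}^{(t)}\|^2$ contributions entering through $e_{i,\ell}$ stay absorbed into the multiplicative factor $a$ rather than blowing up. Collecting the two factors of $\bigo{H}$ yields the claimed $32\eta^2H^2\mathcal{G}^{(t)}$ bound.

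I expect the main obstacle to be the mismatch between where smoothness is available (the exact gradient $\nabla J_i$) and where it is needed (the truncated gradient $\nabla_K J_i$ that actually drives the update); this forces the careful insertion of truncation-error terms at both $\theta_{i,\ell}^{(t)}$ and $\bar\theta^{(t)}$ and is one source of the enlarged $D^2\gamma^{2K}$ coefficient. The second delicate point is the noise bookkeeping: the de-biasing correction freezes $\hat\nabla_K J_i(\bar\theta^{(t)})$ and $\hat\nabla_K J(\bar\theta^{(t)})$ for the whole round, so their noise is coherent across local steps and must be carried as a constant forcing term rather than averaged away. Notably, since this is a per-agent bound, no appeal to cross-agent independence (Assumption~\ref{ass:independence}) is made or needed here, which is why the lemma requires only Assumptions~\ref{ass:smoothness}--\ref{ass:truncation}.
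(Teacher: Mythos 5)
Your proposal is correct and follows essentially the same route as the paper's proof: both split the corrected update direction into an exact-gradient part handled by smoothness, truncation errors, and sampling noise, derive a one-step recursion of the form $(1+\bigo{1/H})\,\E\bracket{\|\Delta_{i,\ell}^{(t)}\|^2} + \bigo{\eta^2 H}\,\mc{G}^{(t)}$ via Eq.~\eqref{eqn:rel_triangle} with $\xi=H$, and unroll from $\Delta_{i,0}^{(t)}=0$, using $3\eta LH\le 1$ to keep the amplification factor bounded by a universal constant. The only minor difference is that you exploit the conditional zero-mean of the fresh sampling noise to cancel a cross term exactly and distinguish it from the stale, round-frozen noise, whereas the paper lumps all noise and truncation deviations into the single term $\cW_{i,\ell}^{(t)}$ and bounds $\E\bracket{\|\cW_{i,\ell}^{(t)}\|^2}\le 18\sigma^2+18D^2\gamma^{2K}$ by Jensen's inequality; your remark that Assumption~\ref{ass:independence} is not needed matches the lemma statement.
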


To gain some intuition about the above result, suppose that there is no noise, i.e., $\sigma =0$, and no truncation, i.e., $D=0$. In other words, suppose all policy gradients are exact. Lemma~\ref{lemma:drift} then tells us that the drift over the round $t$ is caused due to an $O(\eta^2 H^2 \norm{{\nabla}J(\bar{\theta}^{(t)})}^2)$ perturbation. We immediately observe that if $\bar{\theta}^{(t)}= \theta^*$, i.e., the parameter at the beginning of the round is where we eventually want it to be,  then there will be no drift. This is again precisely what we desire, and aligns with the design strategy behind our algorithm \texttt{Fast-FedPG}. 

To summarize the discussion, up to noise- and truncation-induced errors, the ``good" term that contributes to progress is on the order of $\alpha \norm{{\nabla}J(\bar{\theta}^{(t)})}^2$, while the ``bad" term that impedes progress is $O(\eta^2 H^2 \norm{{\nabla}J(\bar{\theta}^{(t)})}^2)$. Since the bad term is a higher-order term in the step-size, by tuning the local and global step-sizes appropriately, one can hope to achieve overall progress. Making the above informal argument precise takes quite a bit of work. In what follows, we flesh out the details, starting with the proof of Lemma~\ref{lemm:one-step}. 
 
\begin{proof} (\textbf{Proof of Lemma~\ref{lemm:one-step}}) From the update rule of \texttt{Fast-FedPG} in Eq.~\eqref{update_FAPG}, we obtain

\begin{align}
    \Bar{\theta}^{(t+1)}-\Bar{\theta}^{(t)} & = -\frac{\alpha_g\eta}{N}\sum_{i=1}^N\sum_{\ell=0}^{H-1}\Bigl(\hat{\nabla}_K J_i(\theta_{i,\ell}^{(t)})-\hat{\nabla}_K J_i(\Bar{\theta}^{(t)})+\hat{\nabla}_KJ(\Bar{\theta}^{(t)})\Bigr)\notag\\
    & =-\frac{\alpha_g\eta}{N}\sum_{i=1}^N\sum_{\ell=0}^{H-1} \hat{\nabla}_K J_i(\theta_{i,\ell}^{(t)}). \notag
\end{align}

Recalling $\alpha=H\eta\alpha_g$, and using Eq.~\eqref{eqn:smooth} in view of the fact that $J(\theta)$ is smooth, we then obtain 
\begin{align}
 \label{eqn:one_step} 
    \E\bracket{J(\Bar{\theta}^{(t+1)})} & \leq \E\bracket{J(\Bar{\theta}^{(t)})} + \E\bracket{\iprod{\nabla J(\bar{\theta}^{(t)})}{\Bar{\theta}^{(t+1)}-\Bar{\theta}^{(t)}}}  +\frac{L}{2}\E\bracket{\norm{\Bar{\theta}^{(t+1)}-\Bar{\theta}^{(t)}}^2}  \notag\\
    &\overset{(a)}{=} \E\bracket{J(\Bar{\theta}^{(t)})} -\E\bracket{\iprod{\nabla J(\bar{\theta}^{(t)})}{\frac{\alpha}{NH}\sum_{i=1}^N\sum_{\ell=0}^{H-1} {\nabla}_K J_i(\theta_{i,\ell}^{(t)})}} \notag\\ 
    &\ + \frac{L}{2}\E\bracket{\norm{\frac{\alpha}{NH}\sum_{i=1}^N \sum_{\ell=0}^{H-1}\hat{\nabla}_K J_i(\theta_{i,\ell}^{(t)})}^2}\notag \\
    &= \E\bracket{J(\Bar{\theta}^{(t)})} \underbrace{-\E\bracket{\iprod{\nabla J(\bar{\theta}^{(t)})}{\frac{\alpha}{NH}\sum_{i=1}^N\sum_{\ell=0}^{H-1} {\nabla} J_i(\theta_{i,\ell}^{(t)})}}}_{T_1} \notag\\
    &\ +\underbrace{\E\bracket{\iprod{\nabla J(\bar{\theta}^{(t)})}{\frac{\alpha}{NH}\sum_{i=1}^N\sum_{\ell=0}^{H-1} \left({\nabla} J_i(\theta_{i,\ell}^{(t)}) - {\nabla}_K J_i(\theta_{i,\ell}^{(t)})\right)}}}_{T_2} \notag\\
    &\ + \frac{L}{2}\E\Biggl[\Biggl\|\frac{\alpha}{NH}\sum_{i=1}^N \sum_{\ell=0}^{H-1}\Bigl({\nabla} J_i(\theta_{i,\ell}^{(t)})+\hat{\nabla}_K J_i(\theta_{i,\ell}^{(t)})-{\nabla}_K J_i(\theta_{i,\ell}^{(t)}) + {\nabla}_K J_i(\theta_{i,\ell}^{(t)})-{\nabla} J_i(\theta_{i,\ell}^{(t)})\Bigr)\Biggr\|^2\Biggr]\notag \\
    &\overset{(b)}{\leq} \E\bracket{J(\Bar{\theta}^{(t)})} +T_1+T_2+\underbrace{2{L}\E\bracket{\norm{\frac{\alpha}{NH}\sum_{i=1}^N \sum_{\ell=0}^{H-1}{\nabla} J_i(\theta_{i,\ell}^{(t)})}^2}}_{T_3}\notag\\
    &\ +\underbrace{2{L}\E\bracket{\norm{\frac{\alpha}{NH}\sum_{i=1}^N \sum_{\ell=0}^{H-1}\left(\hat{\nabla}_K J_i(\theta_{i,\ell}^{(t)})-{\nabla}_K J_i(\theta_{i,\ell}^{(t)})\right)}^2}}_{T_4}\notag\\
    &\ +\underbrace{2{L}\E\bracket{\norm{\frac{\alpha}{NH}\sum_{i=1}^N \sum_{\ell=0}^{H-1}\left({\nabla}_K J_i(\theta_{i,\ell}^{(t)})-{\nabla} J_i(\theta_{i,\ell}^{(t)})\right)}^2}}_{T_5},   
\end{align}
where (a) follows from the unbiasedness assumption (i.e., Assumption~\ref{ass:unbiasedness}) in tandem with the tower property of expectations, and (b) uses Eq.~\eqref{eqn:Jensens}. Let us now proceed to bound each of the five terms $T_1, T_2, T_3, T_4$, and $T_5$. For the term $T_1$, by defining $v_{i,\ell}^{(t)} \triangleq \nabla J_i(\theta_{i,\ell}^{(t)})-\nabla J_i(\bar{\theta}^{(t)})$ and recalling $\Delta_{i,\ell}^{(t)} = \theta^{(t)}_{i,\ell} - \bar{\theta}^{(t)}$, we have
\begin{equation}
\begin{aligned}
    {T_1} &= -\frac{\alpha}{NH}\sum_{i=1}^N\sum_{\ell=0}^{H-1}\E\bracket{\iprod{\nabla J(\Bar{\theta}^{(t)})}{ v_{i,\ell}^{(t)}}} -\frac{\alpha}{NH}\sum_{i=1}^N\sum_{\ell=0}^{H-1}\E\bracket{\iprod{\nabla J(\Bar{\theta}^{(t)})}{ \nabla J_i(\bar{\theta}^{(t)})}}  \\
    & = -\frac{\alpha}{NH}\sum_{i=1}^N\sum_{\ell=0}^{H-1}\E\bracket{\iprod{\nabla J(\Bar{\theta}^{(t)})}{v_{i,\ell}^{(t)}}} -\alpha \E\bracket{\norm{\nabla J(\Bar{\theta}^{(t)})}^2}  \\
    & \overset{(a)}{\leq} \frac{\alpha}{NH}\sum_{i=1}^N\sum_{\ell=0}^{H-1} \E\bracket{\frac{1}{4}\norm{\nabla J(\Bar{\theta}^{(t)})}^2 + \norm{v_{i,\ell}^{(t)}}^2} - \alpha\E\bracket{\norm{\nabla J(\Bar{\theta}^{(t)})}^2}  \\
    & \overset{(b)}\leq \frac{\alpha}{NH}\sum_{i=1}^N \sum_{\ell=0}^{H-1}\E\bracket{\frac{1}{4}\norm{\nabla J(\Bar{\theta}^{(t)})}^2 + L^2\norm{\Delta_{i,\ell}^{(t)}}^2} - \alpha \E\bracket{\norm{\nabla J(\Bar{\theta}^{(t)})}^2}  \\
    & = \frac{\alpha L^2}{NH}\sum_{i=1}^N\sum_{\ell=0}^{H-1}\E\bracket{\norm{\Delta_{i,\ell}^{(t)}}^2} - \frac{3\alpha}{4}\E\bracket{\norm{\nabla J(\Bar{\theta}^{(t)})}^2},
\end{aligned} 
\end{equation}
where (a) is a result of Young's inequality, and (b) exploits the smoothness assumption, i.e., Assumption~\ref{ass:smoothness}. For the term $T_2$, defining $e_i(\theta) \triangleq {\nabla}_K J_i(\theta)-\nabla J_i(\theta)$ and using Young's inequality, we get
\begin{equation}
\begin{aligned}
    {T_2}&\leq\frac{\alpha}{NH}\sum_{i=1}^N\sum_{\ell=0}^{H-1}\E\bracket{\frac{1}{4}\norm{\nabla J(\Bar{\theta}^{(t)})}^2+\norm{e_i(\theta_{i,\ell}^{(t)})}^2}\\
    &\leq \frac{\alpha}{4}\E\bracket{\norm{\nabla J(\Bar{\theta}^{(t)})}^2}+\alpha D^2\gamma^{2K},
\end{aligned}
\end{equation}
where the last step uses the truncation error bound in Assumption~\ref{ass:truncation}. Next, we turn our attention to $T_3$, and observe 
\begin{align}
    T_3 &= 2{L} \E\bracket{\norm{\frac{\alpha}{NH}\sum_{i=1}^N\sum_{\ell=0}^{H-1}  \left(v_{i,\ell}^{(t)}+\nabla J_i(\bar{\theta}^{(t)})\right)}^2} \notag\\
    &= 2{L}\E\bracket{\norm{\frac{\alpha}{NH}\sum_{i=1}^N\sum_{\ell=0}^{H-1}  v_{i,\ell}^{(t)}+\alpha\nabla J(\bar{\theta}^{(t)})}^2} \notag\\
    & \leq 4{L}\E\bracket{\norm{\frac{\alpha}{NH}\sum_{i=1}^N \sum_{\ell=0}^{H-1} v_{i,\ell}^{(t)}}^2} + 4{\alpha^2 L}\E\bracket{\norm{\nabla J(\bar{\theta}^{(t)})}^2}\notag \\
    & \overset{(a)}{\leq}  {\frac{4 \alpha^2 L}{NH}}\sum_{i=1}^N\sum_{\ell=0}^{H-1}\E\bracket{\norm{v_{i,\ell}^{(t)}}^2}+4{\alpha^2 L}\E\bracket{\norm{\nabla J(\bar{\theta}^{(t)})}^2} \notag \\
    & \overset{(b)}{\leq} {\frac{4\alpha^2 L^3}{NH}}\sum_{i=1}^N\sum_{\ell=0}^{H-1}\E\bracket{\norm{\Delta_{i,\ell}^{(t)}}^2} +4{\alpha^2 L}\E\bracket{\norm{\nabla J(\bar{\theta}^{(t)})}^2},
\end{align}    
where (a) follows from Eq.~\eqref{eqn:Jensens} and (b) from smoothness. 
\iffalse
Before proceeding to bound $T_4$, let us establish the following simple claim that will be used by us at various points. 
\begin{equation}
\label{eqn:trunc_gradbnd}
\medmath{
\norm{\nabla_K J_i(\theta_{i,\ell}^{(t)})}^2 \leq 4L^2 \norm{\Delta_{i,\ell}^{(t)}}^2 + 4 \norm{\nabla J_i(\bar{\theta}^{(t)})} + 2 D^2 \gamma^{2K}.
}
\end{equation}
To see why this is true, observe:
\begin{equation}
\medmath{
\begin{aligned}
\norm{\nabla_K J_i(\theta_{i,\ell}^{(t)})}^2 &=\norm{e_i(\theta_{i,\ell}^{(t)})+\nabla J_i(\theta_{i,\ell}^{(t)})}^2 \\
& \leq 2 \norm{e_i(\theta_{i,\ell}^{(t)})}^2+2 \norm{\nabla J_i(\theta_{i,\ell}^{(t)})}^2\\
& \overset{(a)}{\leq} 2 D^2 \gamma^{2K} + 2 \norm{v_i(\theta_{i,\ell}^{(t)})+\nabla J_i(\bar{\theta}^{(t)})}^2\\
& \leq 2 D^2 \gamma^{2K} + 4 \norm{v_i(\theta_{i,\ell}^{(t)})}^2 + 4 \norm{\nabla J_i(\bar{\theta}^{(t)})}\\
& \overset{(b)}\leq 4L^2 \norm{\Delta_{i,\ell}^{(t)}}^2 + 4 \norm{\nabla J_i(\bar{\theta}^{(t)})} + 2 D^2 \gamma^{2K},
\end{aligned}}
\nonumber 
\end{equation}
where (a) uses Assumption~\ref{ass:truncation}, and (b) uses smoothness.
\fi
As for the term $T_4$, we claim: 
\begin{equation}
\begin{aligned}
     T_4 &\overset{(a)}={\frac{2\alpha^2 L}{N^2H^2}}\sum_{i=1}^N \sum_{\ell=0}^{H-1}\E\bracket{\norm{\hat{\nabla}_K J_i(\theta_{i,\ell}^{(t)})-{\nabla}_K J_i(\theta_{i,\ell}^{(t)})}^2}\\
    & \overset{(b)}\leq {\frac{2\alpha^2 L}{N^2H^2}}\sum_{i=1}^N \sum_{\ell=0}^{H-1} \sigma^2= {\frac{2\alpha^2 L \sigma^2}{NH}},
\end{aligned}
\end{equation}
where (b) follows from the variance bound in Assumption~\ref{ass:variance}. To see why (a) holds, define $\cF_\ell^{(t)}$ as the sigma-algebra that captures all the randomness up to the $\ell$-th local iteration of round $t$. We also define $h_i(\theta) \triangleq \hat{\nabla}_K J_i(\theta)-\nabla_K J_i(\theta)$. Given these definitions, expanding $T_4$ yields: 
\iffalse
\begin{equation}
    \begin{aligned}
        T_4/{\frac{2\alpha^2 L}{N^2H^2}}&=\sum_{i=1}^N \sum_{\ell=0}^{H-1}\E\bracket{\norm{h_i(\theta_{i,\ell}^{(t)})}^2} \\
        &\quad +2\sum_{j<k}\sum_{m<n}\E\bracket{\iprod{h_j(\theta_{j,m}^{(t)})}{h_k(\theta_{k,n}^{(t)})}}\\
        &\quad +2\sum_{j=1}^N\sum_{m<n} \E\bracket{\iprod{h_j(\theta_{j,m}^{(t)})}{h_j(\theta_{j,n}^{(t)})}}\\
        &\quad +2\sum_{j<k}\sum_{m=1}^{H-1}\E\bracket{\iprod{h_j(\theta_{j,m}^{(t)})}{h_k(\theta_{k,m}^{(t)})}} \\
        &=\sum_{i=1}^N \sum_{\ell=0}^{H-1}\E\bracket{\norm{h_i(\theta_{i,\ell}^{(t)})}^2} \\
        &\quad +2\sum_{j<k}\sum_{m<n}\E\bracket{\E\bracket{\iprod{h_j(\theta_{j,m}^{(t)})}{h_k(\theta_{k,n}^{(t)})}\Big|\cF_{n-1}^{(t)}}}\\
        &\quad +2\sum_{j=1}^N\sum_{m<n} \E\bracket{\E\bracket{\iprod{h_j(\theta_{j,m}^{(t)})}{h_j(\theta_{j,n}^{(t)})}\Big|\cF_{n-1}^{(t)}}}\\
        &\quad +2\sum_{j<k}\sum_{m=1}^{H-1}\E\bracket{\E\bracket{\iprod{h_j(\theta_{j,m}^{(t)})}{h_k(\theta_{k,m}^{(t)})}\Big|\cF_{m-1}^{(t)}}} \\
        & = \sum_{i=1}^N \sum_{\ell=0}^{H-1}\E\bracket{\norm{h_i(\theta_{i,\ell}^{(t)})}^2}.
    \end{aligned}
\end{equation}
\fi
\begin{equation}
    \begin{aligned}
        T_4/\left({\frac{2\alpha^2 L}{N^2H^2}}\right) &=\sum_{i=1}^N \sum_{\ell=0}^{H-1}\E\bracket{\norm{h_i(\theta_{i,\ell}^{(t)})}^2}+2\sum_{j<k}\sum_{m<n}\E\bracket{\E\bracket{\iprod{h_j(\theta_{j,m}^{(t)})}{h_k(\theta_{k,n}^{(t)})}\Big|\cF_{n-1}^{(t)}}}\\
        &\ +2\sum_{j=1}^N\sum_{m<n} \E\bracket{\E\bracket{\iprod{h_j(\theta_{j,m}^{(t)})}{h_j(\theta_{j,n}^{(t)})}\Big|\cF_{n-1}^{(t)}}} \notag\\
        &\ +2\sum_{j<k}\sum_{m=0}^{H-1}\E\bracket{\E\bracket{\iprod{h_j(\theta_{j,m}^{(t)})}{h_k(\theta_{k,m}^{(t)})}\Big|\cF_{m-1}^{(t)}}} \\
        & = \sum_{i=1}^N \sum_{\ell=0}^{H-1}\E\bracket{\norm{h_i(\theta_{i,\ell}^{(t)})}^2}.
    \end{aligned}
\end{equation}
In the third cross-term, by conditioning on $\cF_{m-1}^{(t)}$,  $\theta_{j,m}^{(t)}$ and $\theta_{k,m}^{(t)}$ become deterministic, and the only randomness comes from the noise in $\hat\nabla_K J_j(\theta_{j,m}^{(t)})$ and $\hat\nabla_K J_k(\theta_{k,m}^{(t)})$ due to sampling trajectories, and the trajectories across agents $j$ and $k$ are independent according to Assumption \ref{ass:independence}. We thus have $\E\bracket{\iprod{h_j(\theta_{j,m}^{(t)})}{h_k(\theta_{k,m}^{(t)})}\Big|\cF_{m-1}^{(t)}} = \iprod{\E\bracket{h_j(\theta_{j,m}^{(t)})\Big|\cF_{m-1}^{(t)}}}{\E\bracket{h_k(\theta_{k,m}^{(t)})\Big|\cF_{m-1}^{(t)}}}=0$, where in the last step we used that based on Assumption \ref{ass:unbiasedness}, $h_j(\theta_{j,m}^{(t)})$ and $h_k(\theta_{k,m}^{(t)})$ are zero-mean conditioned on $\cF_{m-1}^{(t)}$. The fact that the first two cross-terms vanish can be explained similarly. Finally, for the term $T_5$, we can use Eq.~\eqref{eqn:Jensens} followed by Assumption~\ref{ass:truncation} to obtain
\begin{equation}
   T_5 \leq {\frac{2 \alpha^2 L}{NH}}\sum_{i=1}^N \sum_{\ell=0}^{H-1}\norm{e_i(\theta_{i,\ell}^{(t)})}^2 \leq 2 \alpha^2 L D^2 \gamma^{2K}. 
\end{equation}

Plugging in the bounds we obtained for the terms $T_1$-$T_5$ in Eq.~\eqref{eqn:one_step} directly leads to the claim of the lemma. 
\end{proof}

We now turn to the proof of Lemma~\ref{lemma:drift}. 
\begin{proof} (\textbf{Proof of Lemma~\ref{lemma:drift}}) Our immediate goal is to use the update rule of \texttt{Fast-FedPG} to obtain a recursion for $\Delta_{i,\ell}^{(t)} = \theta^{(t)}_{i,\ell} - \bar{\theta}^{(t)}.$ To that end, let us first define a bit of notation as follows. Let $\cV_{i,\ell}^{(t)} \triangleq {\nabla}J_i(\theta_{i,\ell}^{(t)}) -{\nabla}J_i(\bar{\theta}^{(t)})+{\nabla}J(\bar{\theta}^{(t)})$, and $\cW_{i,\ell}^{(t)} \triangleq h_i(\theta_{i,\ell}^{(t)})-h_i(\bar{\theta}^{(t)})+h(\bar{\theta}^{(t)})+e_i(\theta_{i,\ell}^{(t)})-e_i(\bar{\theta}^{(t)})+e(\bar{\theta}^{(t)})$, where $h_i(\theta)=\hat{\nabla}_K J_i(\theta)-\nabla_K J_i(\theta)$, $h(\theta)=\frac{1}{N}\sum_{i=1}^Nh_i(\theta)$, $e_i(\theta) = {\nabla}_K J_i(\theta)-\nabla J_i(\theta)$ and $e(\theta)=\frac{1}{N}\sum_{i=1}^Ne_i(\theta)$. In words, $\cV_{i,\ell}^{(t)}$ is the ideal update direction of \texttt{Fast-FedPG} comprising of exact gradients, while $\cW_{i,\ell}^{(t)}$ captures deviations from this ideal update due to noise and truncation. From  Eq.~\eqref{update_FAPG}, we obtain
\begin{equation}
\begin{aligned}
    \norm{\Delta_{i,\ell+1}^{(t)}} &= \Bigl\|\Delta_{i,\ell}^{(t)} - \eta\Bigl(\hat{\nabla}_KJ_i(\theta_{i,\ell}^{(t)})-\hat{\nabla}_KJ_i(\bar{\theta}^{(t)})+\hat{\nabla}_KJ(\bar{\theta}^{(t)})\Bigr)\Bigr\| \\
    & = \Big\|\Delta_{i,\ell}^{(t)}- \eta\cV_{i,\ell}^{(t)}-\eta \cW_{i,\ell}^{(t)}\Big\| \\
    &\leq \norm{\Delta_{i,\ell}^{(t)}-\eta \cV_{i,\ell}^{(t)}} +\eta \norm{\cW_{i,\ell}^{(t)}}\\
    &\overset{(a)}{\leq} \norm{\Delta_{i,\ell}^{(t)}} +\eta \norm{v_{i,\ell}^{(t)}}+\eta\norm{{\nabla}J(\bar{\theta}^{(t)})} +\eta \norm{\cW_{i,\ell}^{(t)}} \\
    &\overset{(b)}{\leq} (1+\eta L)\norm{\Delta_{i,\ell}^{(t)}} +\eta\norm{{\nabla}J(\bar{\theta}^{(t)})} +\eta \norm{\cW_{i,\ell}^{(t)}},\label{delta1}
\end{aligned}
\end{equation}
where for (a), we used $v_{i,\ell}^{(t)} \triangleq \nabla J_i(\theta_{i,\ell}^{(t)})-\nabla J_i(\bar{\theta}^{(t)})$, and for (b), we used smoothness. Squaring both sides of the above display and using Eq.~\eqref{eqn:rel_triangle} with $\xi=H$, we obtain:
\begin{equation}
\label{eqn:driftbnd1}
\begin{aligned}
\norm{\Delta_{i,\ell+1}^{(t)}}^2 &\leq \left(1+\frac{1}{H}\right) (1+\eta L)^2 \norm{\Delta_{i,\ell}^{(t)}}^2 + (1+H) \eta^2 \left(\norm{{\nabla}J(\bar{\theta}^{(t)})}  +\norm{\cW_{i,\ell}^{(t)}}\right)^2 \\
& \leq \left(1+\frac{1}{H}\right) (1+3\eta L) \norm{\Delta_{i,\ell}^{(t)}}^2 + 4 \eta^2 H \left( \norm{{\nabla}J(\bar{\theta}^{(t)})}^2  +\norm{\cW_{i,\ell}^{(t)}}^2 \right),
\end{aligned}
\end{equation}
where in the second step, we used $H \geq 1$ and $\eta L \leq 1.$ Now from the way we defined $\cW_{i,\ell}^{(t)}$, observe that 
\begin{equation}
\label{eqn:driftbnd2}
\begin{aligned}
\norm{\cW_{i,\ell}^{(t)}}^2 &\leq 2 \underbrace{\norm{h_i(\theta_{i,\ell}^{(t)})-h_i(\bar{\theta}^{(t)})+h(\bar{\theta}^{(t)})}^2}_{(*)}+ 2 \norm{e_i(\theta_{i,\ell}^{(t)})-e_i(\bar{\theta}^{(t)})+e(\bar{\theta}^{(t)})}^2\\
& \hspace{-5mm} \overset{\eqref{eqn:Jensens}}{\leq} 2 (*) + 6 \norm{e_i(\theta_{i,\ell}^{(t)})}^2 + 6 \norm{ e_i(\bar{\theta}^{(t)})}^2 + 6 \norm{e(\bar{\theta}^{(t)})}^2 \\ 
& \hspace{-5mm} \leq  2 (*) + 18 D^2 \gamma^{2K},
\end{aligned}
\end{equation}
where in last step, we used the truncation error bound from Assumption~\ref{ass:truncation}. To establish our desired recursion for $\Delta_{i,\ell}^{(t)}$, it remains to control the term $(*)$ in Eq.~\eqref{eqn:driftbnd2}. This can be achieved using the variance bound from Assumption~\ref{ass:variance}:
\begin{equation}
\label{eqn:driftbnd3}
\begin{aligned}
\E\bracket{(*)} &\leq  3 \E\bracket{\norm{h_i(\theta_{i,\ell}^{(t)})}^2} + 3 \E\bracket{\norm{ h_i(\bar{\theta}^{(t)})}^2} + 3 \E\bracket{\norm{h(\bar{\theta}^{(t)})}^2}\\
& = 3 \E\bracket{\E\bracket{\norm{h_i(\theta_{i,\ell}^{(t)})}^2 \Big| \theta_{i,\ell}^{(t)}}} + 3 \E\bracket{\E\bracket{\norm{ h_i(\bar{\theta}^{(t)})}^2 \Big| \bar{\theta}^{(t)}}}+ 3 \E\bracket{\E\bracket{\norm{ h(\bar{\theta}^{(t)})}^2 \Big| \bar{\theta}^{(t)}}}\\ 
& \leq 6 \sigma^2 + \frac{3}{N^2} \E\bracket{ \sum_{i\in [N]} \E\bracket{\norm{ h_i(\bar{\theta}^{(t)})}^2 \Big| \bar{\theta}^{(t)}}} \leq 9 \sigma^2,\\
\end{aligned}
\end{equation}
where in the last step, we used Assumption~\ref{ass:variance}. Now taking expectations on both sides of Eq.~\eqref{eqn:driftbnd1}, and using the bounds from equations~\eqref{eqn:driftbnd2} and~\eqref{eqn:driftbnd3}, we obtain: 

\begin{equation}
\label{eqn:drift_recurs}
   \E\bracket{\norm{\Delta_{i,\ell+1}^{(t)}}^2} 
    \leq \underbrace{\left(1+\frac{1}{H}\right)(1+3\eta L)}_{\beta}\E\bracket{\norm{\Delta_{i,\ell}^{(t)}}^2} + 4 \eta^2 H \cG^{(t)}, 
\end{equation}
where $\mc{G}^{(t)}$ is as in~\eqref{eqn:final_drift_bnd}.
Now defining $d_{i,\ell} \triangleq  \E\bracket{\norm{\Delta_{i,\ell}^{(t)}}^2}$ and unrolling~\eqref{eqn:drift_recurs}, we obtain $\forall \ell \in [H-1]:$
\begin{equation}
\begin{aligned}
d_{i,\ell} &  \leq \beta^{\ell} d_{i,0} + 4\eta^2H\cG^{(t)} \left(\sum_{j=0}^{\ell-1} \beta^j\right)\\ 
& \overset{(a)}\leq \beta^H d_{i,0} + 4 \eta^2 H^2 \beta^H \cG^{(t)} \\
& \overset{(b)} \leq 8 d_{i,0} + 32 \eta^2 H^2 \cG^{(t)}  \overset{(c)} = 32 \eta^2 H^2 \cG^{(t)}. 
\end{aligned}
\end{equation}

For (a), we used $\beta > 1$ and $\ell < H$. For (b), we made the following observations by noting that $\eta L H \leq 1/3$:
$\beta^H = \left(1+\frac{1}{H}\right)^{H} (1+3 \eta L)^H \leq \left(1+\frac{1}{H}\right)^{2H} < 8,$
where in the last step, we used $\left(1+\frac{1}{x}\right)^x\leq e, \forall x >0$. Finally, for (c), we used $d_{i,0}=0$ since $\Delta_{i,0}^{(t)} = 0, \forall i \in [N]$. This establishes the bound claimed in the lemma. 
\end{proof}

\textbf{Proof of Theorem~\ref{thm:onestep}}.  
Plugging the result of Lemma \ref{lemma:drift} into Lemma \ref{lemm:one-step}, applying $\eta = \frac{\alpha}{H\alpha_g}$, setting $\alpha_g=1$, and simplifying using $\alpha L\leq 1$, we obtain 
\begin{equation}
\begin{aligned}
    \E\bracket{J(\bar{\theta}^{(t+1)})} &\leq \E\bracket{J(\bar{\theta}^{(t)})} - \left(\frac{\alpha}{2}-{C\alpha^2L}\right)\E\bracket{\norm{\nabla J(\Bar{\theta}^{(t)})}^2} + \bigo{\frac{\alpha^2 L\sigma^2}{NH}+\alpha^3L^2\sigma^2} + \bigo{\alpha}D^2\gamma^{2K},
\end{aligned}
\nonumber
\end{equation}
where $C \geq 1$ is some universal constant. Ensuring $4 \alpha C L \leq 1$ leads to the claim in Theorem~\ref{thm:onestep}. Note that with $\alpha_g=1$, all the requirements on $\alpha =\eta H$ above can be fulfilled by picking $\eta$ such that $\eta \leq 1/(4C LH).$ The proof follows by noting that this choice of $\eta$ suffices for Lemma~\ref{lemma:drift} to hold. 
\iffalse
g the Big-$\cO$ notation to hide universal constants, we obtain:
\begin{equation}
\begin{aligned}
    \quad \E\bracket{J(\bar{\theta}^{(t+1)})} &\leq \E\bracket{J(\bar{\theta}^{(t)})} -  \left(\frac{\alpha}{2}-\bigo{\alpha^2L+\frac{\alpha^3L^2}{\alpha_g^2}+\frac{\alpha^4L^3}{\alpha_g^2}}\right)\E\bracket{\norm{\nabla J(\Bar{\theta}^{(t)})}^2}\\
    &\quad + \bigo{\frac{\alpha^2 L\sigma^2}{NH}+\frac{\alpha^3L^2\sigma^2}{\alpha_g^2}+\frac{\alpha^4L^3\sigma^2}{\alpha_g^2}}\\
    &\quad + \bigo{\alpha +\alpha^2 L+\frac{\alpha^3L^2}{\alpha_g^2}+\frac{\alpha^4L^3}{\alpha_g^2}}D^2\gamma^{2K}.
\end{aligned}
\nonumber
\end{equation}
By setting $\alpha_g = 1$ and  ensuring  we can further refine the above bound to obtain:
\fi

\section{Conclusion}
We studied the problem of finding an optimal policy that performs well on average across multiple heterogeneous environments, where each environment is modeled as a Markov Decision Process (MDP). To find such an optimal policy, we formulated a federated policy optimization problem, and developed the first communication-efficient policy gradient algorithm that (i) achieves fast linear rates; (ii) provides a linear speedup in sample-complexity w.r.t. the number of agents; and (iii) incurs no heterogeneity-induced bias. As future work, we plan to study the problem of learning personalized policies in the context of multi-task/federated RL. 
\bibliographystyle{unsrt} 
\bibliography{refs_CDC}

\begin{thebibliography}{10}

\bibitem{FRL}
Jiaju Qi, Qihao Zhou, Lei Lei, and Kan Zheng.
\newblock Federated reinforcement learning: Techniques, applications, and open challenges.
\newblock {\em arXiv:2108.11887}, 2021.

\bibitem{FRL_identical_linear}
Sajad Khodadadian, Pranay Sharma, Gauri Joshi, and Siva~Theja Maguluri.
\newblock Federated reinforcement learning: Linear speedup under {M}arkovian sampling.
\newblock In {\em Int. Conf. on Machine Learning}, pages 10997--11057. PMLR, 2022.

\bibitem{dal2023federated}
Nicol{\`o} Dal~Fabbro, Aritra Mitra, and George~J Pappas.
\newblock Federated {TD} learning over finite-rate erasure channels: Linear speedup under {M}arkovian sampling.
\newblock {\em IEEE Control Systems Letters}, 7:2461--2466, 2023.

\bibitem{liu2023distributed}
Rui Liu and Alex Olshevsky.
\newblock Distributed {TD} $(0)$ with almost no communication.
\newblock {\em IEEE Control Systems Letters}, 7:2892--2897, 2023.

\bibitem{lan2023improved}
Guangchen Lan, Han Wang, James Anderson, Christopher Brinton, and Vaneet Aggarwal.
\newblock Improved communication efficiency in federated natural policy gradient via {ADMM}-based gradient updates.
\newblock {\em arXiv preprint arXiv:2310.19807}, 2023.

\bibitem{woo2023blessing}
Jiin Woo, Gauri Joshi, and Yuejie Chi.
\newblock The blessing of heterogeneity in federated {Q}-learning: Linear speedup and beyond.
\newblock In {\em International Conference on Machine Learning}, pages 37157--37216. PMLR, 2023.

\bibitem{tian2024one}
Haoxing Tian, Ioannis~Ch Paschalidis, and Alex Olshevsky.
\newblock One-shot averaging for distributed {TD} ($\lambda$) under markov sampling.
\newblock {\em IEEE Control Systems Letters}, 2024.

\bibitem{jin2022federated}
Hao Jin, Yang Peng, Wenhao Yang, Shusen Wang, and Zhihua Zhang.
\newblock Federated reinforcement learning with environment heterogeneity.
\newblock In {\em International Conf. on Artificial Intelligence and Stat.}, pages 18--37. PMLR, 2022.

\bibitem{fedtd}
Han Wang, Aritra Mitra, Hamed Hassani, George~J Pappas, and James Anderson.
\newblock Federated temporal difference learning with linear function approximation under environmental heterogeneity.
\newblock {\em arXiv:2302.02212}, 2023.

\bibitem{zhang2024finite}
Chenyu Zhang, Han Wang, Aritra Mitra, and James Anderson.
\newblock Finite-time analysis of on-policy heterogeneous federated reinforcement learning.
\newblock {\em arXiv preprint arXiv:2401.15273}, 2024.

\bibitem{sodhani}
Shagun Sodhani, Amy Zhang, and Joelle Pineau.
\newblock Multi-task reinforcement learning with context-based representations.
\newblock In {\em International Conference on Machine Learning}, pages 9767--9779. PMLR, 2021.

\bibitem{mcmahan}
Brendan McMahan, Eider Moore, Daniel Ramage, Seth Hampson, and Blaise~Aguera y~Arcas.
\newblock Communication-efficient learning of deep networks from decentralized data.
\newblock In {\em AISTATS}, pages 1273--1282. PMLR, 2017.

\bibitem{xie2023fedkl}
Zhijie Xie and Shenghui Song.
\newblock {FedKL}: Tackling data heterogeneity in federated reinforcement learning by penalizing {KL} divergence.
\newblock {\em IEEE Journal on Selected Areas in Comm.}, 41(4):1227--1242, 2023.

\bibitem{zeng2021decentralized}
Sihan Zeng, Malik~Aqeel Anwar, Thinh~T Doan, Arijit Raychowdhury, and Justin Romberg.
\newblock A decentralized policy gradient approach to multi-task reinforcement learning.
\newblock In {\em Uncertainty in Artificial Intelligence}. PMLR, 2021.

\bibitem{meiglobal}
Jincheng Mei, Chenjun Xiao, Csaba Szepesvari, and Dale Schuurmans.
\newblock On the global convergence rates of softmax policy gradient methods.
\newblock In {\em Int. Conf. on Machine Learning}, pages 6820--6829. PMLR, 2020.

\bibitem{yuan2022}
Rui Yuan, Robert~M Gower, and Alessandro Lazaric.
\newblock A general sample complexity analysis of vanilla policy gradient.
\newblock In {\em Int. Conf. on Artificial Intelligence and Statistics}, pages 3332--3380. PMLR, 2022.

\bibitem{bai2024finite}
Yitao Bai and Thinh Doan.
\newblock Finite-time complexity of incremental policy gradient methods for solving multi-task reinforcement learning.
\newblock In {\em 6th Annual Learning for Dynamics \& Control Conference}, pages 1046--1057. PMLR, 2024.

\bibitem{wang2024momentum}
Han Wang, Sihong He, Zhili Zhang, Fei Miao, and James Anderson.
\newblock Momentum for the win: Collaborative federated reinforcement learning across heterogeneous environments.
\newblock {\em arXiv preprint arXiv:2405.19499}, 2024.

\bibitem{puterman}
Martin~L Puterman.
\newblock Markov decision processes.
\newblock {\em Handbooks in Operations Research and Management Science}, 2:331--434, 1990.

\bibitem{agarwalPG}
Alekh Agarwal, Sham~M Kakade, Jason~D Lee, and Gaurav Mahajan.
\newblock On the theory of policy gradient methods: Optimality, approximation, and distribution shift.
\newblock {\em Journal of Machine Learning Research}, 22(98):1--76, 2021.

\bibitem{suttonpolicy}
Richard~S Sutton, David McAllester, Satinder Singh, and Yishay Mansour.
\newblock Policy gradient methods for reinforcement learning with function approximation.
\newblock {\em Advances in Neural Information Processing Systems}, 12, 1999.

\bibitem{mitraNIPS21}
Aritra Mitra, Rayana Jaafar, George~J Pappas, and Hamed Hassani.
\newblock Linear convergence in federated learning: Tackling client heterogeneity and sparse gradients.
\newblock {\em Advances in Neural Information Processing Systems}, 34:14606--14619, 2021.

\bibitem{scaffold}
Sai~Praneeth Karimireddy, Satyen Kale, Mehryar Mohri, Sashank Reddi, Sebastian Stich, and Ananda~Theertha Suresh.
\newblock Scaffold: Stochastic controlled averaging for federated learning.
\newblock In {\em International Conference on Machine Learning}, pages 5132--5143. PMLR, 2020.

\bibitem{fazel}
Maryam Fazel, Rong Ge, Sham Kakade, and Mehran Mesbahi.
\newblock Global convergence of policy gradient methods for the linear quadratic regulator.
\newblock In {\em Int. Conf. on Machine Learning}, pages 1467--1476. PMLR, 2018.

\bibitem{agarwal2019reinforcement}
Alekh Agarwal, Nan Jiang, Sham~M Kakade, and Wen Sun.
\newblock Reinforcement learning: Theory and algorithms.
\newblock {\em CS Dept., UW Seattle, Seattle, WA, USA, Tech. Rep}, 32, 2019.

\bibitem{xiao2022}
Lin Xiao.
\newblock On the convergence rates of policy gradient methods.
\newblock {\em Journal of Machine Learning Research}, 23(282):1--36, 2022.

\end{thebibliography}

\end{document}